\begin{document}
	
	\title[Short Title]{$\varepsilon$-weakened Robustness of Deep Neural Networks}         

	
	\author{Pei Huang}
	\affiliation{
		\department{State Key Laboratory of Computer Science}              
		\institution{Institution of Software, CAS}            
		\city{Beijing}
		\country{China}                    
	}
	\email{haungpei@ios.ac.cn}          
	
		\author{Yuting Yang}
	\affiliation{
		\department{Key Laboratory of Intelligent Information}              
		\institution{ Institute of Computing Technology, CAS}            
		\city{Beijing}
		\country{China}                    
	}
	\email{yangyuting@ict.ac.cn}          
	
	\author{Minghao Liu}
\affiliation{
	\department{State Key Laboratory of Computer Science}              
	\institution{Institution of Software, CAS}            
	\city{Beijing}
	\country{China}                    
}
\email{liumh@ios.ac.cn}          
		\author{Fuqi Jia}
	\affiliation{
		\department{State Key Laboratory of Computer Science}              
		\institution{Institution of Software, CAS}            
		\city{Beijing}
		\country{China}                    
	}
	\email{jiafq@ios.ac.cn}  
	
			\author{Feifei Ma}
	\authornote{Corresponding author}          
	\affiliation{
		\department{State Key Laboratory of Computer Science}              
		\institution{Institution of Software, CAS}            
		\city{Beijing}
		\country{China}                    
	}
	\email{maff@ios.ac.cn} 
	
			\author{Jian Zhang}
	\authornote{Corresponding author}          
	\affiliation{
		\department{State Key Laboratory of Computer Science}              
		\institution{Institution of Software, CAS}            
		\city{Beijing}
		\country{China}                    
	}
	\email{zj@ios.ac.cn} 
	
	\begin{abstract}
		This paper introduces a notation of $\varepsilon$-weakened robustness for analyzing the reliability and stability of deep neural networks (DNNs). Unlike the conventional robustness, which focuses on the ``perfect'' safe region in the absence of adversarial examples, $\varepsilon$-weakened robustness focuses on the region where the proportion of adversarial examples is bounded by user-specified $\varepsilon$. Smaller $\varepsilon$ means a smaller chance of failure. Under such robustness definition, we can give conclusive results for the regions where conventional robustness ignores. We prove that the $\varepsilon$-weakened robustness decision problem is PP-complete and give a statistical decision algorithm with user-controllable error bound. Furthermore, we derive an algorithm to find the maximum $\varepsilon$-weakened robustness radius. The time complexity of our algorithms is polynomial in the dimension and size of the network. So, they are scalable to large real-world networks. Besides, We also show its potential application in analyzing quality issues.
	\end{abstract}
	\keywords{deep neural network, robustness, reliability, testing, statistical method }  

	\maketitle

\section{Introduction}
Deep neural networks (DNNs) have been broadly applied in various domains. However, many studies have shown that these state-of-the-art models can be easily compromised by adding small perturbations \cite{Szegedy,GoodfellowSS14,ChenSZYH18}, i.e., visually imperceptible image perturbations can cause the misclassification of neural networks. Such unstable properties have raised serious concerns about the reliability of the application based on DNNs. Addressing for safe adoption of DNNs requires robustness metrics and efficient analyzing methods to understand the reliability and vulnerability of neural networks.

In recent years, a large body of prior work has focused on robustness which is defined on the absence of adversarial examples around a given point. These robust regions are absolutely safe against any malicious attacks. However, it is not enough to understand the reliability and stability of neural networks, we notice that: (1) The ``perfect" safe regions are usually very small for the real-world random perturbations. Take an example, for ImageNet task, more than 90\% regions of a well-trained ResNet are unsafe under the condition of the perturbation radius $2/255$ \cite{WongRK20}. In the real environment, a drop of rain on the input camera or the changes in light and shade can often shift the input out of these ``perfect" safe regions. In this situation, such robustness analyzing can not provide a conclusion about whether an NN can resist these perturbations well. (2) The definition of conventional robustness can suffer from the ``curse of dimensionality'', which means, in high-dimensional space, the majority part of the safe region around a given point can be ignored (Sec.\ref{Aanalyses}). (3) The robustness radius is strongly related to the location of a given point. So we cannot make a comprehensive judgment on the reliability of the NN in that region (Sec.\ref{Aanalyses}).

Unlike traditional programs, there is no basic theory to guarantee the correctness of deep neural networks. Even if a concept is PAC-learnable \cite{Valiant84}, the learning algorithm can only learn a close approximation to the target concept with generalization error. In 2019, David .et.al \cite{Stutz0S19} point out that on-manifold robustness is essentially generalization and adversarial examples are generalization errors. As the generalization error cannot be eliminated, it is difficult to eliminate all adversarial examples around a point at this stage. These adversarial examples distributed everywhere and make almost all the perfect safe regions in norm bounds too small, as if the network is just robust at the given test points. Many studies show that improving such robustness a little is usually accompanied by a sharp decline in accuracy \cite{WongRK20,ShafahiNG0DSDTG19} and make NN inapplicable in practice. So the pursuit of error-free robustness sometimes can be too severe for many applications.


In this paper, we study the robustness from the perspective of quantification. We formalize a robustness metric called \textbf{$\varepsilon$-weakened robustness} ($\varepsilon$-robustness in short) which allows some adversarial examples around a test data point but the proportion is bounded by a user-specified $\varepsilon$. Smaller $\varepsilon$ indicates a smaller failure rate or the smaller chance that the neural network yields to non-crafted perturbations. For many practical applications, safety standards are defined on the specified low failure rates. For instance, in autonomous robotics and self-driving settings where there is some environment model and we would like to bound the probability of an adverse outcome \cite{kalra2016driving,koopman2019safety}. Thus, analyzing the $\varepsilon$-robustness of neural networks is very helpful for building a reliable system in deployment.

After introducing $\varepsilon$: (1) we can draw analysis results about the reliability of neural networks when the real-word perturbations often shift the inputs out of the perfect safe regions. (2) It is also helpful to alleviate the ``curse of dimensionality'' in conventional robustness analysis and reduce the dependence of analysis results on the locations of test points. (3) It not only can be applied to analyze whether the region around a correctly classified point is ``good'' enough to resist perturbations, but also to analyze whether the region around a misclassified point is so ``bad''. (4) It can be extended to the multi-labels version for analyzing the risk. Sometimes misclassification does not mean disaster. If the network recognizes a car as a truck, the self-driving car can still make a right decision to avoid it. The extended version $\varepsilon$-robustness can be applied to analyzing the risk of not recognizing a car as a vehicle by a neural network. 

We call determining whether a region is $\varepsilon$-robust as \textbf{$\varepsilon$-weakened robustness decision problem} and finding the maximum $\varepsilon$-robust radius as \textbf{$\varepsilon$-weakened robustness evaluation problem}. We prove that the $\varepsilon$-robustness decision problem is PP-complete, so it is not feasible to tackle these problems with precise methods for real-world large DNNs. Thus, we apply a statistical inference method with user-controllable error bounds (or confidence) $\alpha$ and $\beta$. $\alpha$ is the probability of \textbf{false positives} and $\beta$ is the probability of \textbf{false negative}. Such simulations and statistical inference can be promising in analyzing deep learning systems because each execution of a neural network takes a little time even when the model has millions of parameters. Besides, many potential executions can be simulated in parallel easily. 


Our contributions can be summarized as follows:
\begin{itemize}
	\item We analyze the limitation of conventional robustness in reliability analysis and formalize $\varepsilon$-weakened robustness for DNNs. It can provide a more comprehensive analysis of the reliability and risk of DNN models.
	\item We prove that $\varepsilon$-weakened robustness decision problem is $PP$-complete.
	\item We propose a probabilistic algorithm for $\varepsilon$-weakened robustness decision problem with provable guarantees w.r.t $\alpha$ and $\beta$. The algorithm is polynomial in input dimension and size of the network. 
	\item We propose an algorithm to tackle $\varepsilon$-weakened robustness evaluation problem under the framework of querying decision oracle.
	\item We evaluate our methods on the popular neural networks and data sets. They are scalable to large DNNs with more than 100 layers. Furthermore, we conduct some experiments to show its potential applications in analyzing the quality issues of deep learning systems.
\end{itemize}

\section{Preliminaries}

\subsection{Neural Networks and Local Robustness}
Let $f: \mathbb{D}^n \rightarrow \mathbb{R}^m$ be a neural network classifier such that,
for a given input $x \in \mathbb{D}^n$, $f(x)=\{o_1(x),o_2(x),..., o_m(x)\}\in \mathbb{R}^m$ represents the confidence values for $m$ classification labels. The prediction of $x$ is given as $F(x)=\mathop{\arg\max}_{1\leq i \leq m} o_i(x)$. 
\begin{definition}[Local Robustness]
	Given a test point $x_*$ with predicted label $l_*$, the neural network is locally robust at point $x_*$ with respect to a distortion radius $r$ if the following formula holds:
	\begin{equation}\label{rob}
		\forall x_*' .\  x_*' \in B_p(x_*, r) \Rightarrow F(x_*')=l_*
	\end{equation}
	where $B_p(x_*, r)=\{x_*'\ | \left\| x_*'-x_*\right\|_p \leq r \}$ is an $\ell_p$-norm ball. 
\end{definition}
It means that any perturbed examples of $x_*$ with $\ell_p$-distortion $\Delta_p \leq r$ will not change the classification results. 

The \textbf{Local Robustness Evaluation (LRE)} problem over an input $x_*$ aims at finding the maximum radius of a safe $\ell_p$-norm ball with the center $x_*$. It can be formalized as an optimization problem:
\begin{equation}
	\begin{split}
		&\mathbf{max} \ \  r  \\
		&\mathbf{s.t}\ \ \forall x_*' .\  x_*' \in B_p(x_*, r) \Rightarrow F(x_*')=l_*
	\end{split}
\end{equation}
We call the maximized $r$ as \textbf{robustness radius}. Limited by computational complexity, finding the maximum radius is very difficult and most of the work is devoted to giving a lower bound.

\begin{definition}[Lipschitz continuity]
	Let $S\subset \mathbb{R}^n$ be a convex bounded closed set and let $f(x): S\rightarrow \mathbb{R}$ be a function on an open set containing $S$. Then $f$ is Lipschitz continuous if there exists a real constant $K>0$ such that  $\forall x,y \in S$:
	\begin{equation*}
		\forall x,y \in S, |f(y)-f(x)|\leq K \left\| y-x\right\|_p
	\end{equation*}
\end{definition}
\begin{definition}[Continuity]
	Let $S\subset \mathbb{R}^n$ be a convex bounded closed set and let $f(x): S\rightarrow \mathbb{R}$ be a function on an open set containing $S$. Then $f$ is continuous at point $x_0$ if $\forall \xi>0$, $\exists \eta>0$:
	\begin{equation*}
		\forall x \in S, \left\| x-x_0\right\|_p <\eta \Rightarrow  |f(x)-f(x_0)|<\xi
	\end{equation*}
\end{definition}
Lipschitz continuity implies continuity.

\begin{lemma}\label{lemma1}
	Let $f:\mathbb{R}^n \rightarrow \mathbb{R}^m$ be a neural network continuous at each point of the domain $S$ and $x_*$ is an input. The prediction of the network is given by $l=\mathop{\arg\max}_{1\leq i \leq m} o_i(x_*)$. If $\forall i \in \{1,2...m\}$, $o_l(x_*) > o_i(x_*)$ where $i\neq l$ holds, then there exists $\eta$ such that:
	\begin{equation*}
		\forall x \in S, \left\| x-x_*\right\|_p < \eta \Rightarrow  F(x)=l
	\end{equation*}
\end{lemma}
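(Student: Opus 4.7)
The plan is a short continuity argument exploiting the strict gap between the top logit $o_l$ and every other logit at $x_*$. First I would define, for each competitor class $i \neq l$, the positive margin $\delta_i := o_l(x_*) - o_i(x_*) > 0$, which exists by the strict-$\arg\max$ hypothesis. Since there are only finitely many classes, $\delta := \min_{i \neq l} \delta_i$ is also strictly positive, and this finiteness is what makes the whole argument work.

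Next I would invoke the continuity of each coordinate function $o_i$ at $x_*$ (component-wise continuity follows from continuity of $f$ at $x_*$). For the target coordinate $o_l$, pick $\eta_l > 0$ so that $\|x - x_*\|_p < \eta_l$ forces $|o_l(x) - o_l(x_*)| < \delta/4$; for each $i \neq l$, pick $\eta_i > 0$ so that $\|x - x_*\|_p < \eta_i$ forces $|o_i(x) - o_i(x_*)| < \delta/4$. Setting $\eta := \min\{\eta_l, \eta_1, \dots, \eta_m\} > 0$ (again finite minimum), any $x \in S$ with $\|x - x_*\|_p < \eta$ satisfies all these inequalities simultaneously.

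Then for such $x$ and any $i \neq l$, I would chain the estimates
\begin{equation*}
o_l(x) > o_l(x_*) - \tfrac{\delta}{4} \;\geq\; o_i(x_*) + \delta - \tfrac{\delta}{4} \;>\; o_i(x) + \delta - \tfrac{\delta}{2} \;\geq\; o_i(x),
\end{equation*}
so $l$ remains the unique maximizer of $o_1(x), \dots, o_m(x)$, which by definition gives $F(x) = l$.

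I do not expect any real obstacle: the only subtlety is making sure the finitely-many continuity thresholds can be aggregated into a single $\eta$, and that the chosen tolerance ($\delta/4$) is small enough that the margin $\delta$ survives on both sides of the inequality. Both are immediate because $m$ is finite and $\delta > 0$ is fixed once $x_*$ is fixed. Note also that Lipschitz continuity is not needed here — plain continuity suffices, which is consistent with the lemma's hypothesis.
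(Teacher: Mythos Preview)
Your proposal is correct and follows essentially the same continuity argument as the paper: exploit the strict positive margin at $x_*$, use continuity to keep the margin positive in a neighborhood, and take the minimum of finitely many radii. The only cosmetic difference is that the paper applies continuity directly to the difference functions $g_i(x)=o_l(x)-o_i(x)$ with individual tolerances $\xi_i=g_i(x_*)$, whereas you control each coordinate $o_i$ separately with a uniform tolerance $\delta/4$; both routes are equivalent and equally elementary.
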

\begin{proof}
	Let $g_{i}(x)=o_l(x)-o_i(x)$ ($i\neq l $), we have $g_{i}(x)$ is continuous at point $x_*$ and $g_{i}(x_*)>0$. Assume $g_{i}(x_*)=\xi_i$, based on the definition of continuity there exist $\eta_i$ such that:
	\begin{equation*}
		\forall x \in S, \left\| x-x_*\right\|_p <\eta_i \Rightarrow  |g_{i}(x)-g_{i}(x_*)|<\xi_i
	\end{equation*}
	which implies that
	\begin{equation*}
		\forall x \in S, \left\| x-x_*\right\|_p <\eta_i \Rightarrow  g_{i}(x)>0
	\end{equation*}
	Let $\eta=\min_{i\neq l }{\{\eta_i\}}$, we have for all $i$ over ($i\neq l $):
	\begin{equation*}
		\forall x \in S, \left\| x-x_*\right\|_p <\eta \Rightarrow  o_l(x)>o_i(x)
	\end{equation*}
	So, there exists $\eta>0$, the classification result will not change when $\left\| x-x_*\right\|_p <\eta$
\end{proof}

Lemma \ref{lemma1} states several facts about continuous neural network and test point satisfying $o_l(x_*) > o_i(x_*)$:
(1) If a point is predicted correctly, there must be a neighborhood such that all points in it will be predicted correctly.
(2) If we find an adversarial example, there must be a neighborhood such that all points in it will be predicted incorrectly. 

Almost all neural networks defined on $ \mathbb{R}^n$ are continuous unless discontinuous activation functions (e.g. step function) are used. Furthermore, convolution, softmax, max pooling, and contrast normalization, fully connected layer with ReLU, sigmoid, hyperbolic tangent activation functions are proved to be Lipschitz continuous \cite{SzegedyZSBEGF13,RuanHK18}. 

\section{Limitations of Conventional  Robustness} \label{Aanalyses}

The conventional robustness can help to understand the ability of the NN to resist attacks. But it has some limitations in understanding the reliability and building a reliable system in practice. 

In machine learning, a class of hypotheses $\mathcal{H}$ (or possible models $f_\Theta$) is PAC learnable if for any pair ($\varepsilon$, $\delta$) with $0<\varepsilon$, $\delta \leq 0.5$, the learning algorithm produces a high accuracy hypothesis (model) $f_\theta \in \mathcal{H}$ with high probability:
\begin{equation*}
	P(\vert R(f_\theta)-\hat{R}(f_\theta) \vert \leq \varepsilon ) \geq 1-\delta
\end{equation*}
where $R$ is generalization error and $\hat{R}$ is training error. Even a concept is PAC learnable, we cannot expect an algorithm to learn it exactly. The only realistic expectation of a good learner is that it will learn a close approximation to the target concept with high probability. Moreover, we cannot always expect a learner to learn a close approximation to the target concept since sometimes the training set does not represent unseen examples. There may be many concepts consistent with the available data, and unseen examples may have any label. On-manifold robustness is essentially generalization. We cannot eliminate the generalization error and adversarial examples can be everywhere. These adversarial examples make almost all the perfect safe regions in norm bounds too small. Even after improving, they still cannot meet the needs of real-world perturbations. In addition, the improvement is usually accompanied by a sharp drop in accuracy. So only considering robustness with error-free sometimes can be too severe for many applications.

\begin{figure}
		\begin{subfigure}{0.17\textwidth} 
			\includegraphics[width=\linewidth]{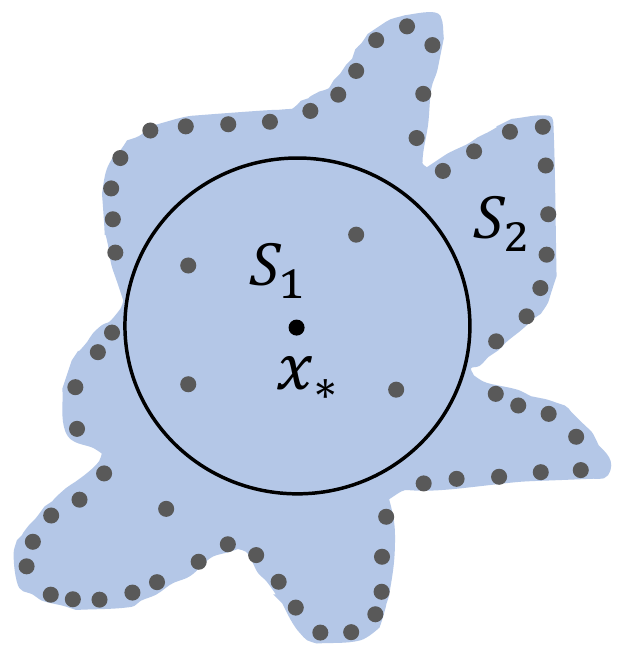}
			\caption{ }
			\label{RobustA}
		\end{subfigure}
		\hspace{10mm}
			\begin{subfigure}{0.159\textwidth} 
				\includegraphics[width=\linewidth]{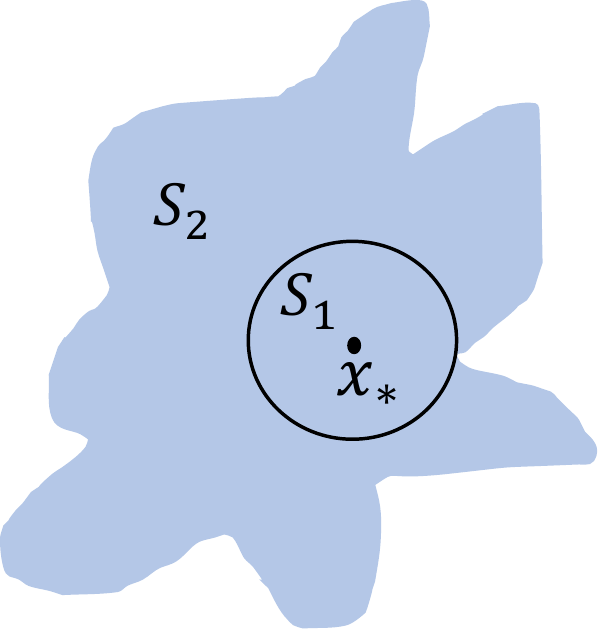}
				\caption{ }
				\label{RobustB}
			\end{subfigure}
				\caption{(a) Situation 1: the majority part in safe region is out of the safe norm ball because of the ``curse of dimensionality''. (b) Situation 2: the safe norm ball is very small as the given point is located near the boundary of the safe region.}
				\label{fig:Robustness}
\end{figure}

Besides, we notice that the definition of conventional robustness can suffer from the ``curse of dimensionality'' and is strongly related to the location of the given test point. 

For example, Figure \ref{fig:Robustness}(a) shows the optimal safe norm ball $B_2(x_*,r)$ around point $x_*$. Suppose the real safe region around the point $x_*$ is $S_2$, then we have:
\begin{equation*}
\lim\limits_{n \rightarrow \infty} \frac{Vol(S_2)-Vol(S_1)}{Vol(S_2)}=1
\end{equation*}
where $n$ is the dimension of the space and $Vol(\cdot)$ is the volume. It implies that the majority of points are located at the edge of $S_2$. This means that as the dimensionality grows $S_1$ is like a ``hollow ball''. Actually, computing the optimal norm ball is very difficult, and existing methods can only give lower bounds. We cannot expect to evaluate $S_2$ accurately, because it is usually a high-dimensional non-convex region. 

Sometimes, a given point may be near the edge of real-world data distribution in that region. Figure \ref{fig:Robustness} (b) shows the situation where given $x_*$ is located in a ``bad'' position. The robustness analysis result reveals that the safe region is very small. However, the real safe region can be much bigger than the safe norm ball. In such a situation, we may draw a biased evaluation result about the safe region because the analysis results are only based on the radius of the safe norm ball. 

\section{ $\varepsilon$-Weakened Robustness}

First, we define an indicative function $T(F,x,\Omega)$. $\Omega$ is designed for multi-labels extension.
\begin{definition}
	Consider a neural classifier $F(x)$. Given a test point $x$ with a label set $\Omega$, the indicative function $T(F,x,\Omega)$ is defined as:
	\begin{equation}
		T(F,x,\Omega)=\left\{
		\begin{aligned}
			1 &  & F(x) \in \Omega\\
			0 &  &F(x) \notin \Omega
		\end{aligned}
		\right.
	\end{equation}
\end{definition}

\begin{definition}[\textbf{$\varepsilon$-weakened robustness}]
	Consider a neural classifier $F(x)$. Given a test point $x_*$ with the gold label $l_*$ and a $p$-norm ball with radius $r$, if the formula:
	\begin{equation}\label{ewr}
		\frac{\mu(\{x |  x \in B_p(x_*, r) \wedge T(F,x,\Omega)=1\})}{\mu(B_p(x_*, r))}> 1-\varepsilon
	\end{equation}
	holds for a measure $\mu$ and parameter $\varepsilon$, then $F(x)$ is said to be $\varepsilon$-weakened robust at point $x_*$ with respect to the radius $r$. $\mu(\cdot) \in [0,+\infty)$, $\Omega=\{l_*\}$, $p \in \{1,2,\infty\}$ and $\varepsilon \in (0,1)$ is a fixed-length computer representable binary number (e.g., float 64 or 32).
\end{definition}
We briefly denote it as \textbf{$\varepsilon$-robustness}. It can be seen as a relaxation of formula (\ref{rob}). Deciding whether formula (\ref{ewr}) is ``SAT'' is called  \textbf{$\varepsilon$-weakened robustness decision problem}. If the input space $\mathbb{D}^n$ is defined on a countable set, $\mu$ is the counting measure (i.e., the cardinality of the set). If the input space is $\mathbb{R}^n$, $\mu$ is the volume. Based on the property depicted in lemma \ref{lemma1}, volume is a reasonable measure. 

\begin{figure}
	\centering
	\includegraphics[width=1\linewidth]{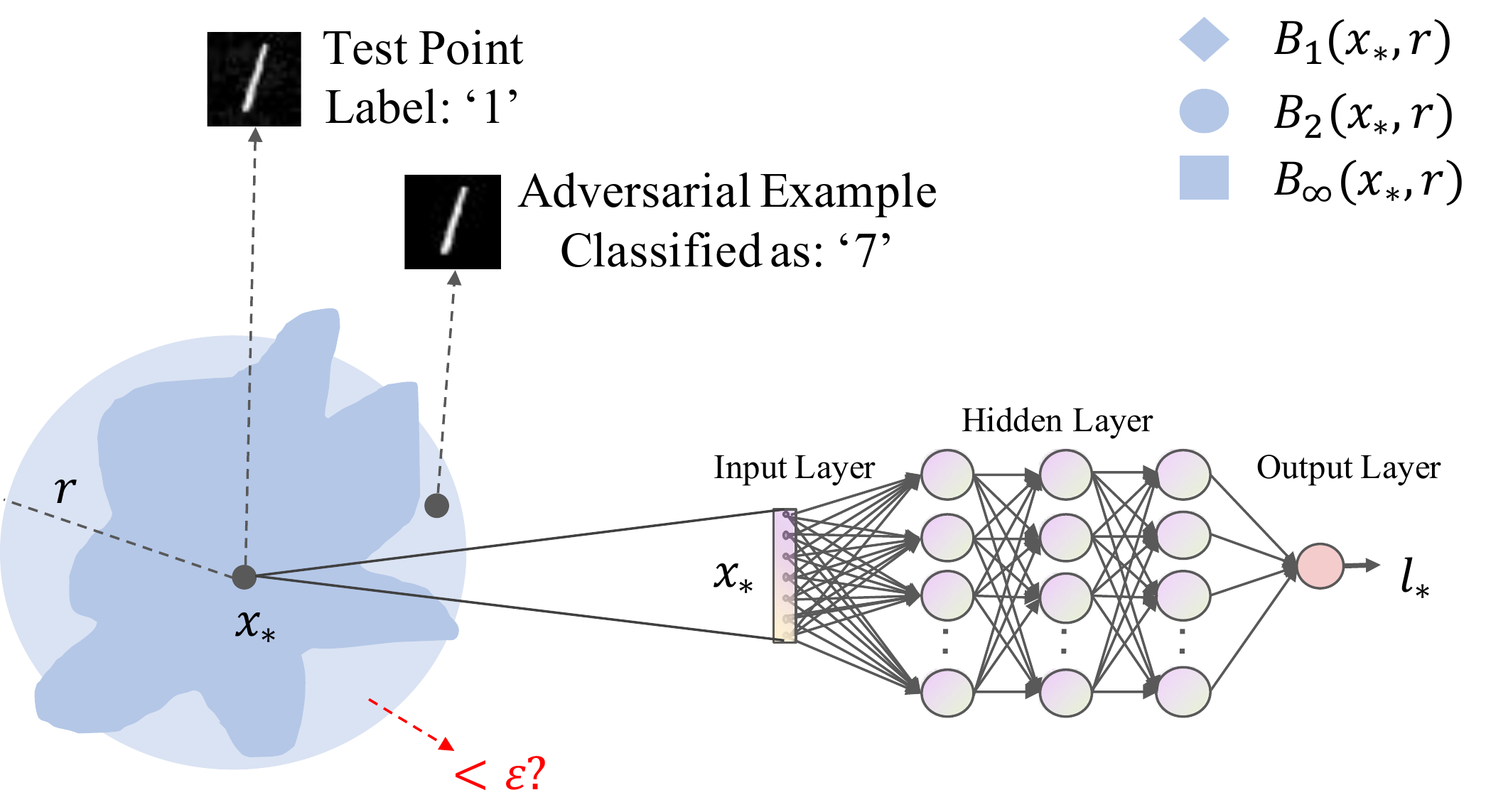}
	\caption{$\varepsilon$-weakened robustness}
	\label{fig:E-robustness}
\end{figure}

\begin{definition} \textbf{$\varepsilon$-weakened robustness evaluation problem} is an optimization problem:
		\begin{equation}
		\begin{split}
		&\mathbf{max} \ \  r  \\
		&\mathbf{s.t}\ \ \varepsilon\text{-}robust(x_*,r)
		\end{split}
		\end{equation}
	where $\varepsilon$-$robust(x_*,r)$ denotes the classifier $F(x)$ is $\varepsilon$-robust at point $x_*$ w.r.t. the radius $r$.
\end{definition}

After introducing $\varepsilon$, we can provide conclusive analysis results about the regions which are not ``perfect'' safe. Besides, the majority of a safe region around a given point is considered. Introducing $\varepsilon$ also helps to alleviate the dependence of analysis results on the locations of given points as the norm bound can go outside the ``perfect'' safe region.

\subsection{Computational Complexity}
Actually, it is better to compute $\mu(\{x |  x \in B_p(x_*, r) \wedge T(F,x,\Omega)=1\})$. However, this problem is \#P-hard and even the Monte Carlo approximation is exponential in the dimensionality for the worst case when the error bound is fixed \cite{simonovits2003compute}. As an alternative, answering whether the formula (\ref{ewr}) is true is computationally easier.


\begin{theorem}
	$\varepsilon$-robustness decision problem is $PP$-complete.
\end{theorem}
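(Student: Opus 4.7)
The plan is to show PP-completeness by establishing PP-membership and PP-hardness separately. For membership, I would exhibit a probabilistic polynomial-time machine whose acceptance probability exceeds $1/2$ iff~(\ref{ewr}) holds. Let $\rho$ denote the ratio appearing in~(\ref{ewr}). The basic sampler draws $x$ uniformly from $B_p(x_*,r)$, evaluates $F(x)$ in polynomial time, and accepts iff $F(x) = l_*$, so its acceptance probability equals $\rho$. To convert the threshold from $1-\varepsilon$ to $1/2$, I compose this sampler with an $\varepsilon$-biased coin via a single fair coin flip: on heads run the sampler, on tails accept with probability $\varepsilon$. The overall acceptance probability is $(\rho+\varepsilon)/2$, which strictly exceeds $1/2$ iff $\rho > 1-\varepsilon$. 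Since $\varepsilon$ is a fixed-length binary rational, the $\varepsilon$-biased coin is implementable with polynomially many fair coin flips, and the composite machine runs in polynomial time.

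For hardness, I would reduce from \textsc{MajSat}, the canonical PP-complete problem: given a Boolean formula $\phi(a_1,\ldots,a_n)$, decide whether strictly more than $2^{n-1}$ assignments satisfy $\phi$. The reduction is cleanest in the discrete regime allowed by the definition, with $\mathbb{D}^n = \{0,1\}^n$ and $\mu$ counting measure. Given $\phi$, I translate it into a polynomial-size two-class ReLU network $F$ using the standard simulation of Boolean gates by small ReLU sub-networks (e.g.\ $\mathrm{AND}(a,b)=\max(0,a+b-1)$, $\mathrm{NOT}(a)=1-a$), so that $F(a)=1$ iff $\phi(a)=1$. Take $x_* = (0,\ldots,0)$ and $r = 1$ under the $\ell_\infty$ metric so that $B_\infty(x_*,1)\cap\{0,1\}^n = \{0,1\}^n$, set $l_* = 1$, and set $\varepsilon = 1/2$. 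The ratio in~(\ref{ewr}) is then exactly $|\{a \in \{0,1\}^n : \phi(a)=1\}|/2^n$, so the $\varepsilon$-robustness instance is a YES-instance iff the \textsc{MajSat} instance is, and the entire construction is polynomial time.

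The main obstacle I anticipate is reconciling the continuous volume measure in~(\ref{ewr}) with the discrete randomness available to a PP machine. In the membership direction, if the input space is $\mathbb{R}^n$, the sampler must generate $x \in B_p(x_*,r)$ uniformly using only polynomially many random bits while keeping its acceptance probability equal to the volume ratio $\rho$. I would address this by noting that a ReLU network with rational weights of polynomially bounded bit length partitions the ball into polynomially many rational polytopes, so $\rho$ is rational with polynomially bounded denominator, and a grid sampler of correspondingly polynomial precision therefore reproduces $\rho$ exactly (or approximates it within a gap smaller than any that could straddle $1-\varepsilon$). A secondary delicacy, in the hardness direction, is that should one want to argue PP-hardness under the volume measure directly, the threshold gate $[x_i \geq 1/2]$ cannot be realized exactly by a continuous ReLU function; however, a steep two-ReLU shoulder gives the correct $\{0,1\}$-value outside a transition strip whose total volume can be driven below any inverse-polynomial gap, which is enough to preserve the \textsc{MajSat} comparison against $\varepsilon = 1/2$.
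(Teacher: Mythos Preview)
Your proposal is correct and follows essentially the same two-part strategy as the paper: membership via a uniform sampler composed with a biased coin through a fair coin flip, and hardness via a polynomial-size ReLU encoding of a CNF formula reduced from \textsc{MajSat}, instantiated with $\varepsilon=1/2$, $p=\infty$, $x_*=\mathbf{0}$, $r=1$ over the Boolean cube.

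There are only two minor differences worth noting. First, the paper biases the ``tails'' branch by $\varepsilon - |\mathbb{D}|^{-(n+1)}$ rather than by $\varepsilon$, so that the acceptance probability is \emph{strictly} below $1/2$ even on the boundary NO-instance $\rho = 1-\varepsilon$; your simpler coin gives exactly $1/2$ there, which is still admissible under Gill's original definition of PP but not under the symmetric ``error $<1/2$ on both sides'' formulation the paper adopts. Second, the paper builds the network at the clause level (one ReLU gadget $y_i = 1 - \max(0,1-\sum_j l_j)$ per clause, then compares $\sum_i y_i$ against a constant $m-0.5$ in a two-output head) rather than at the gate level as you do; both constructions are polynomial. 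Finally, the paper's full proof in the appendix is stated only for countable input domains $\mathbb{D}^n$, so your discussion of handling the continuous volume measure via grid sampling and ReLU polytope decomposition goes beyond what the paper actually proves.
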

We only show a proof skeleton here and full details are provided in the appendix (see supplementary material). First, we prove that the $\varepsilon$-robustness decision problem is in $PP$. We give a polynomial-time probabilistic algorithm with an error probability of less than 1/2, which is a simplified version of Algorithm 1 (In sec. \ref{Dsec}). Then we prove MAJSAT($\geq2^{n-1}$), which asks if a CNF (or SAT) formula has at least half of the possible truth assignments, can be reduced to $\varepsilon$-robustness decision problem. The MAJSAT($\geq2^{n-1}$) is known to be PP-complete\cite{stoc/Gill74}. We use polynomial-size neurons with ReLU function to simulate negation operators and clauses. In this context, MAJSAT can be transformed to the $1/2$-robustness decision problem of a binary classification network.

\section{Methodology}\label{theorysec}

It is known that $NP \subseteq PP \subseteq \#P$. Although introducing $\varepsilon$ helps to reduce computational complexity, the $PP$-completeness of the problem still means that we must expect the worst-case performance of the exact algorithm to be poor. So we provide a statistical inference method to tackle the $\varepsilon$-robustness decision problem with user-controllable error bounds (or confidence). In most cases, such a provable guarantee can meet the needs of software engineering.

We put the $\varepsilon$-robustness decision problem in the framework of hypothesis testing, and explain how to control the error bounds.
\subsection{Reduction to hypothesis testing}
Let 
\begin{equation*}
	p_{r}=\frac{\mu(\{x' |  x' \in B_p(x_*, r) \wedge T(F,x',\Omega)=1\})}{\mu(B_p(x_*, r))}\
\end{equation*}
then the $\varepsilon$-robustness decision problem is to determine whether $H_0$ or $H_1$ is true:
\begin{equation*}
	H_0:p_{r} > 1-\varepsilon \ \ \ \ H_1:p_{r} \leq 1-\varepsilon
\end{equation*}
$H_0$ is called \textbf{null hypothesis} which means the NN is $\varepsilon$-robust and $H_1$ is called \textbf{alternative hypothesis} which is the opposite.

A procedure which is based on the value of a statistic, usually the sample mean $\overline{y}$, for deciding whether to accept or reject $H_0$ is called a \emph{test of a statistical hypothesis}. It is also a method for measuring the strength of the evidence against the null hypothesis contained in the experimental data. The set of values 
\begin{equation*}
	\mathcal{C} = \{ \overline{y}: y \leq c\}
\end{equation*} 
is called the \textbf{rejection region} of the test, since $H_0$ is rejected when $\overline{y}$ falls in $\mathcal{C}$. Two types of errors can occur in this statistical decision-making process, and they are shown in Table \ref{altable}.
\begin{table}[h]
	\caption{Null and alternative hypothesis, decision rules, type I and type II errors.}
	\label{altable}
	\begin{tabular}{|c|c|l|}
		\cline{1-3}
		$\overline{y}$&$H_0\ is\ true$ & $H_1\ is\ true$\\
		\cline{1-3}
		$\overline{y}> c$ (accept $H_0$)&$Correct\ decision$& $Type\ \uppercase\expandafter{\romannumeral2}\ error$ \\
		\cline{1-3}
		$\overline{y}\leq c$ (reject $H_0$)& $Type\ \uppercase\expandafter{\romannumeral1}\ error$  &$Correct\ decision$\\
		\cline{1-3}
	\end{tabular}
\end{table}

A type $\uppercase\expandafter{\romannumeral1}$ error occurs when $H_0$ is actually true but one rejects $H_0$ (accepts $H_1$). A type $\uppercase\expandafter{\romannumeral2}$ error occurs when $H_0$ is actually false but one accepts $H_0$ (rejects $H_1$). 
\begin{itemize}
	\item The probability of type $\uppercase\expandafter{\romannumeral1}$ error (false positive) is defined as:
	\begin{equation*}
		P(\text{Reject } H_0|H_0\text{ is true})=P(\overline{Y}\leq c |  H_0\text{ is true})
	\end{equation*} 
	\item The probability of type $\uppercase\expandafter{\romannumeral2}$ error (false negative) is defined as:
	\begin{equation*}
		P(\text{Accept } H_0| H_0 \text{ is false})=P(\overline{Y}> c |  H_0 \text{ is false})
	\end{equation*} 
\end{itemize}
\textbf{Terminology:} The significance level of the test is the probability of a type \uppercase\expandafter{\romannumeral1} error.

We can answer the hypothesis $H_0$ with uniform sampling in $B_p(x_*, r)$. We know that the satisfaction of $T(F,x_*,\Omega)=1$ is binomial distribution. We take random samples of sizes $N$ from the space $B_p(x_*, r)$ independently, and denote their possible outcomes as independent random variables $Y_1, Y_2,...,Y_N$ where $Y_i \sim Bernoulli(p_r)$.

If $N$ satisfies
\begin{equation*}
	N>N(1-\varepsilon)+3\sqrt{N\varepsilon(1-\varepsilon)}
\end{equation*} 
\begin{equation}\label{samplesize1}   
	\text{or } \sqrt N >3 \sqrt \frac{1-\varepsilon}{\varepsilon}
\end{equation} 
then a ``large-sample'' test is done. Based on \textbf{central limit theorem}, if $\overline{Y}=(\sum\nolimits_{i=1}^N  Y_i)/N$, $\sigma=p_r(1-p_r)$ we have:
\begin{equation}
	Z=\frac{(\overline{Y}-p_r)}{\sigma/\sqrt N} \sim  \mathcal{N}(0,1)
\end{equation} 
where $\mathcal{N}(0,1)$ is a Gaussian distribution.

\subsection{Controlling Type \uppercase\expandafter{\romannumeral2} Error}
We first introduce how to control the probability of committing a type $\uppercase\expandafter{\romannumeral2}$ error. Because it is very important for answering the $\varepsilon$-robustness decision problem. We do not hope the network is not $\varepsilon$-robust but the algorithm often answers ``SAT''.

The probability of a type $\uppercase\expandafter{\romannumeral2}$ error is as follows:
\begin{equation*}
	\begin{aligned} 
		P(\overline{Y} > c |  H_0\text{ is false})& \leq P(\overline{Y}> c | p_r=1-\varepsilon)\\
		&=P(\frac{\overline{Y}-p_r}{\sigma/\sqrt N} > \frac{c-p_r}{\sigma/\sqrt N} | p_r=1-\varepsilon)\\
		&=P(\frac{\overline{Y}-p_r}{\sigma/\sqrt N}> \frac{c-(1-\varepsilon)}{\sigma/\sqrt N})\\
		&=P(Z > \frac{c-(1-\varepsilon)}{\varepsilon(1-\varepsilon)/\sqrt N})
	\end{aligned}
\end{equation*}

So, if we expect the probability of committing a type $\uppercase\expandafter{\romannumeral2}$ error is less than $\beta$, we have :
\begin{equation*}
	P(Z > \frac{c-(1-\varepsilon)}{\varepsilon(1-\varepsilon)/\sqrt N}) \leq \beta
\end{equation*}
\begin{equation*}
	\Rightarrow \frac{c-(1-\varepsilon)}{\varepsilon(1-\varepsilon)/\sqrt N}\geq z_{1-\beta}
\end{equation*}

$z_{1-\beta}$ is the $1-\beta$ quantile of $\mathcal{N}(0,1)$ and we have:
\begin{equation}\label{con_2error}
	c \geq \frac{\varepsilon(1-\varepsilon)z_{1-\beta}}{\sqrt N}+(1-\varepsilon)
\end{equation}
When $c$ satisfies formula (\ref{con_2error}), the probability of committing a type $\uppercase\expandafter{\romannumeral2}$ error is smaller than $\beta$.

\subsection{Controlling Type \uppercase\expandafter{\romannumeral1} Error}
The probability of type $\uppercase\expandafter{\romannumeral1}$ error follows:
\begin{equation*}
	\begin{aligned}
		P(\overline{Y}\leq c |  H_0 \text{ is true})&=P(\overline{Y}\leq c |  p_r > 1-\varepsilon)\\
		&=P(\overline{Y}\leq c |  p_r \geq 1-\varepsilon')\\
		& \leq P(\overline{Y}\leq c |  p_r = 1-\varepsilon')\\
		&=P(\frac{\overline{Y}-p_r}{\sigma/\sqrt N}\leq \frac{c-p_r}{\sigma/\sqrt N} | p_r=1-\varepsilon')\\
		&=P(Z \leq \frac{c-(1-\varepsilon')}{\sigma/\sqrt N})\\
	\end{aligned}
\end{equation*}
$\varepsilon'$ is the closest machine number where $\varepsilon>\varepsilon'$.

So, the probability of committing a type $\uppercase\expandafter{\romannumeral1}$ error being less than $\alpha$ can be given by:
\begin{equation*}
	\begin{aligned} 
		&P(Z \leq \frac{c-(1-\varepsilon')}{\sigma/\sqrt N}) \leq \alpha\\
		&\Rightarrow \frac{c-(1-\varepsilon')}{\varepsilon'(1-\varepsilon')/\sqrt N}\leq z_{\alpha}\\
	\end{aligned}
\end{equation*}
where $z_{\alpha}$ is the $\alpha$ quantile of $\mathcal{N}(0,1)$. 

Let $c=(\varepsilon(1-\varepsilon)z_{1-\beta})/\sqrt N+(1-\varepsilon)$, we have:
\begin{equation}\label{samplesize2}
	\sqrt N \geq \frac{\varepsilon(1-\varepsilon)z_{1-\beta}-\varepsilon'(1-\varepsilon')z_{\alpha}}{\varepsilon-\varepsilon'}
\end{equation}
Based on (\ref{samplesize1}) and (\ref{samplesize2}), the sample size $N$ is given by:
\begin{equation}\label{samplesize3}
	\sqrt N \geq \max\{\frac{\varepsilon(1-\varepsilon)z_{1-\beta}-\varepsilon'(1-\varepsilon')z_{\alpha}}{\varepsilon-\varepsilon'},3 \sqrt \frac{1-\varepsilon}{\varepsilon}\}
\end{equation}

If the sample size $N$ satisfies formula (\ref{samplesize3}), then the probability of committing a type $\uppercase\expandafter{\romannumeral1}$ error is smaller than $\alpha$. It is also called a test with \textbf{significant level at $\alpha$}. 

In practice, controlling the type \uppercase\expandafter{\romannumeral1} error and type \uppercase\expandafter{\romannumeral2} error simultaneously can be a heavy burden for computation. In order to improve efficiency, we strictly control \uppercase\expandafter{\romannumeral2} error and loosen the other one as an alternative. For robustness decision problem, strictly controlling type \uppercase\expandafter{\romannumeral2} error is more important. Because when the network is not robust, the algorithm answering ``SAT'' can be dangerous. So we let $\varepsilon'$ smaller than $\varepsilon$ e.g., $\varepsilon'=\varepsilon-\min(\sigma, 0.005)$ where $\sigma \approx \varepsilon(1-\varepsilon)$. It means we strictly control $P(\text{reject } H_0 |  p_r \geq 1-\varepsilon')$ to be less than $\alpha$.

\section{Algorithms}
In this section, we present a probabilistic algorithm for $\varepsilon$-robustness decision problem and analyze the time complexity. Then, we give an algorithm to find the maximum $\varepsilon$-robustness radius.
\subsection{$\varepsilon$-Robustness Decision Algorithm}\label{Dsec}
Based on the theory described in Sec.\ref{theorysec}, the decision algorithm is outlined in Algorithm 1 and the sampling methods are in appendix.
\begin{algorithm}
	\caption{Probabilistic algorithm for $\varepsilon$-weakening robustness decision problem (Is-$\varepsilon$-robust())}
	\KwIn{$F$- network classifer, $x_*$- test point, $r$-distortion radius, $p$- $p$ norm $\varepsilon$- weakening parameter, $\alpha$-type $\uppercase\expandafter{\romannumeral1}$ error , $\beta$-type $\uppercase\expandafter{\romannumeral2}$ error}
	\KwOut{ $SAT$ or $UNSAT$.}
	$\Omega \leftarrow \{l_*-\text{label of }x_*\}$\;
	$Construct\ function\ T(F, x, \Omega)$\;
	$z_{\alpha}\leftarrow \Phi^{-1}(\alpha)$ ; \tcc*[f]{$\Phi$ is the CDF of $\mathcal{N}(0,1)$}
	
	$z_{1-\beta}\leftarrow \Phi^{-1}(1-\beta)$\;
	$\varepsilon'=\varepsilon-\min(\varepsilon(1-\varepsilon), 0.005)$\;
	$N \leftarrow \lceil\max\{(\frac{\varepsilon(1-\varepsilon)z_{1-\beta}-\varepsilon'(1-\varepsilon')z_{\alpha}}{\varepsilon-\varepsilon'})^2, 9\frac{1-\varepsilon}{\varepsilon}\}\rceil$\;
	$c \leftarrow \frac{\varepsilon(1-\varepsilon)z_{1-\beta}}{\sqrt N}+(1-\varepsilon)$\; 
	$\overline{y}\leftarrow 0$\;
	\For{$i \leftarrow 1$ \KwTo $N$}
	{
		$y_{i} \leftarrow sample\ a\ point\ uniformly\ in\ B_p(x_*,r)$ \;
		\If{$T(F, y_{i}, \Omega)=1$}{
			$\overline{y}\leftarrow \overline{y}+1$\;
			
		}
					\lIf{$\overline{y} \geq c \cdot N$}{\Return $SAT$}
					\lIf{$\overline{y} < (c-1) \cdot N+i$}{\Return $UNSAT$}	
	}
\end{algorithm}

The input parameter $\varepsilon$ can be specified by the safety requirements. The probabilities of false positive and false negative,$\alpha$ and $\beta$, are controllable to users. In theory, $p$ can take any positive integer, but it is difficult to give an efficient uniform sampling algorithm when dimension $n$ is large. The time complexity of the naive rejection sampling method is exponential in dimension $n$. $\varepsilon'$ can be any value in $(0,\varepsilon)$ customized by the balance between precision requirements and running time. 

The time complexity of uniformly sampling in $\ell_1$-norm ball is $\mathcal{O}(n\log n)$ in average and $\mathcal{O}(n^2)$ in the worst case. Each component of a sampling point is constructed by mutually independent random variables over $[0,r]$. The time complexity of uniformly sampling in $\ell_2$-norm ball is $\mathcal{O}(n)$. It was first proposed by Muller \cite{Muller}. The algorithm makes use of a strong property of the Gaussian distribution that the exponent part is the same as the expression for calculating the $\ell_2$-norm distance. The time complexity of uniformly sampling in $\ell_{\infty}$-norm ball is $\mathcal{O}(n)$.

We use $t_s(n)$ to denote the running time of sampling algorithm and $t(|F|)$ to denote the execution time of NN for an input. $t(|F|)$ is a polynomial function of the size of network. So the time complexity of Algorithm 1 is $\mathcal{O}(N\cdot(t_s(n)+t(|F|)))$. Based on statistical evidence, we can infer the final result before sampling $N$ times (line 14-15 in Algorithm 1). Thus Algorithm 1 can take less time than the theoretical upper bound.

Next, we analyze the influence of the parameters on Algorithm 1.
\begin{itemize}
	\item Based on Eq. (\ref{samplesize1}), when $\varepsilon \rightarrow 0$ the sample size $N \rightarrow \infty$. $N$ is a polynomial function of $1/\varepsilon$.

	\item When $\varepsilon' \rightarrow \varepsilon$, the sample size $N\rightarrow \infty$ (line 6 of the Algorithm 1 or Eq. (\ref{samplesize2})). In practical, when $(\varepsilon-\varepsilon')=$$\sigma$ or $0.005$, the number of samples is about tens of thousands.
	
	\item When $\beta \rightarrow 0$, $z_{1-\beta}\rightarrow \infty$. Based on Eq. (\ref{samplesize2}), we have the sample size $N \rightarrow \infty$. The smaller the probability of committing a type $\uppercase\expandafter{\romannumeral2}$ error we expect, the more sampling points we need, and we cannot eliminate the error. The relationship between $\beta$ and $N$ cannot be expressed by an elementary function, but the increase of $N$ is relatively flat because of the good properties of the Gaussian distribution.
	
	\item When $\alpha \rightarrow 0$, $z_{\alpha}\rightarrow -\infty$. Based on Eq. (\ref{samplesize2}), we have the sample size $N \rightarrow \infty$. It is similar to $\beta$.
\end{itemize}
The implementation of our algorithm is parallel on the GPUs.

\subsection{$\varepsilon$-Robustness Evaluation Algorithm}
In this subsection, we present an algorithm for $\varepsilon$-weakened robustness evaluation problem. It can be tackled on the framework of Turing reduction. In sec \ref{Dsec}, we have shown how to decide whether a neural network is $\varepsilon$-robust for a given radius $r$. So the maximum value of $r$ can be searched by querying this decision oracle iteratively. This framework can be used to solve many optimization problems in theory. However, whether it works in practice depends on the complexity of the decision oracle. 

It can be realized in incremental or decremental search but they are not efficient in practice. We assume that the number of adversarial examples will increase as the radius increases. Although it has not been rigorously proven, it is usually the case. Based on this assumption, we can realize an efficient binary search version such as Algorithm 5. 
\begin{algorithm}[h]
	\caption{Algorithm for $\varepsilon$-weakening robustness evaluating problem}
	\KwIn{$F$- neural classifer, $x_*$- test point, $R$-upper bound for distortion radius, $p$- $p$ norm $\varepsilon$- weakening parameter, $\alpha$-type $\uppercase\expandafter{\romannumeral1}$ error , $\beta$-type $\uppercase\expandafter{\romannumeral2}$ error}
	\KwOut{$r_*$}
	$r_{min} \leftarrow 0$\;
	$r_{max} \leftarrow R$\;
	\While{$r_{max}-r_{min}>precision$ and $r_{max}\neq0$ and $r_{min}\neq R$}{
		$r=(r_{max}-r_{min})/2$\;
		\eIf{Is-$\varepsilon$-robust$(F,x_*,r,p,\varepsilon,\alpha,\beta)=SAT$}
		{$r_{min} \leftarrow r$\;}
		{$r_{max} \leftarrow r$\;}
	}
	{\Return $r_*\leftarrow r_{min}$}
\end{algorithm}

\section{Multi-label Extension}
Sometimes, the misclassification of a neural network does not necessarily lead to disaster. For example, in a self-driving system, if the neural network recognizes a car as a truck, this situation may not be dangerous. Another example is that if the neural network recognizes an 80 km speed limit sign as a 60 km one, the vehicle is safe at this time. Our evaluating method can be compatible with this situation. We just need to add multiple labels to the set $\Omega$.
\begin{example}
	A point $x_*$ is a car and $l_0$ is its label. We denote the label of trucks as $l_1$ and minivans as $l_2$. Then, let $\Omega=\{l_0,l_1,l_2\}$, we can evaluate the $\varepsilon$-weakening robustness of recognizing the possible vehicles.
\end{example}

This extension is inapplicable for conventional robustness, because the regions of different predictions may not be contiguous. So the robustness radius can be no change compared with the single label one.

\section{Experiments}
We implemented our algorithms with Python3. We use the PyTorch deep learning platform to train and test DNNs. The hardware setups are as follows. CPU: Intel Xeon Gold 6154 @3.00GHz. GPU: GeForce RTX 2080TI. OS: Ubuntu 18.04.3.
\begin{itemize}
	\item \textbf{EWRD} is the name of our $\varepsilon$-\textbf{W}eakened \textbf{R}obustness \textbf{D}ecision tool\footnote{The link of the tools will be given after the double-blind review.\label{footnote1}}.
	\item \textbf{EWRE} is the name of our $\varepsilon$-\textbf{W}eakened \textbf{R}obustness \textbf{E}valuation tool\textsuperscript{\ref {footnote1}}.
\end{itemize}
\textbf{ERAN}\cite{SinghGPV19} which is the state-of-the-art robustness verification tool for neural networks is used as a reference in our experiment.

MNIST, CIFAR-10, MiniImageNet are used in experiments. The MNIST dataset consists of handwritten digital images with size 28$\times$28 in 10 classes. CIFAR-10 dataset consists of color images with size 32$\times$32$\times$3 in 10 classes. MiniImageNet dataset consists of color images with input size 224$\times$224$\times$3 in 100 classes. The value of each pixel is in $[0,255]$ and the perturbation is considered on the original input images. $\alpha$ and $\beta$ are set to 0.001 by default in all experiments. 

\subsection{$\varepsilon$-Robustness \& Reliability}
\textbf{Q1:} \emph{How about the reliability of NNs under the premise of allowing the existence of some adversarial examples?}

We first compare the $\varepsilon$-robustness with conventional robustness on CIFAR-10 of a convolutional network which has 4 layers and 7154 neurons. The experiment is conducted in the context of $\ell_{\infty}$-perturbations because ERAN does not support other norms. Fig \ref{ERvsEW}(\subref{ERAN:sub1}) shows the robustness curve given by ERAN. Fig \ref{ERvsEW}(\subref{EWRD:sub2}) shows the robustness curve given by our EWRD for various $\varepsilon$. We can see that for most regions, perfect safe radii reported by ERAN are usually smaller than $2$. ERAN uses the abstract interpretation technique, which is an over-approximation method, so the radius that can be verified to be safe is usually smaller than the real one. Based on \cite{WongRK20,ShafahiNG0DSDTG19}, almost all regions of the most popular NNs will be attacked successfully at $r=8$ on CIFAR-10. However, if a small proportion of adversarial examples is allowed, the ``robust'' regions will become much bigger. When perturbation radius is $18.49$, 70\% regions are $0.001$-robust.
 
Then we show the experimental results on four popular DNNs: ResNet18 \cite{HeZRS16}, RegNetX \cite{RadosavovicKGHD20}, DenseNet121 \cite{HuangLMW17} and DPN92 (Dual Path Networks) \cite{ChenLXJYF17}. These networks are difficult for conventional robustness verification tools. So we omit the results about conventional robustness. We apply EWRD to these networks on CIFAR-10 for $\varepsilon \in \{0.01,0.1,0.15,0.2\}$. The robustness curves are shown in Fig \ref{cifa:verified}. Take ResNet18 in Fig \ref{cifa:verified} (\subref{cifa:sub1}) for an example, when perturbation radius $r$ is smaller than $12.75$, more than 90\% test points satisfy $0.01$-robustness. That means most areas (90\%) of the network can perform well against a perturbation within distance $12.75$ and the probability of misclassification is less than $0.01$. However, when we increase perturbation $r$ to $76.5$, there are only 25\% regions satisfying $0.2$-robustness. In other words, most areas (75\%) of the network are vulnerable to such perturbations. They may yield to a perturbation with a probability of more than 20\%. We also find that when the radius is greater than a certain threshold, the number of adversarial examples will increase sharply. So the distance among curves of  $\varepsilon=0.1,0.15,0.2$ is very close.

\begin{figure}
	\centering
	\begin{subfigure}{0.23\textwidth}
		\centering   
		\includegraphics[width=1\linewidth]{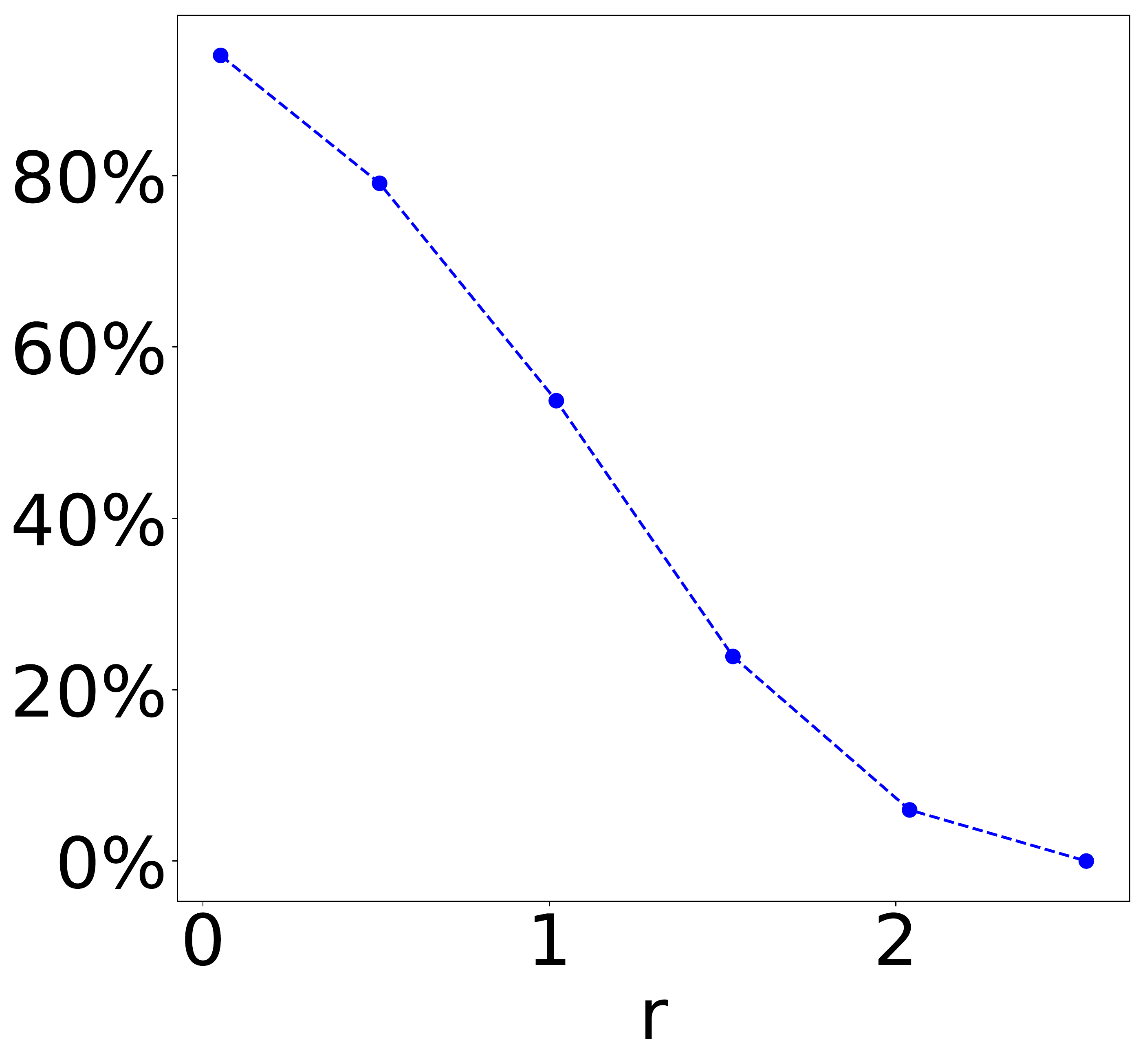}
		\caption{ERAN}
		\label{ERAN:sub1}
	\end{subfigure}
	\begin{subfigure}{0.23\textwidth}
		\centering   
		\includegraphics[width=\linewidth]{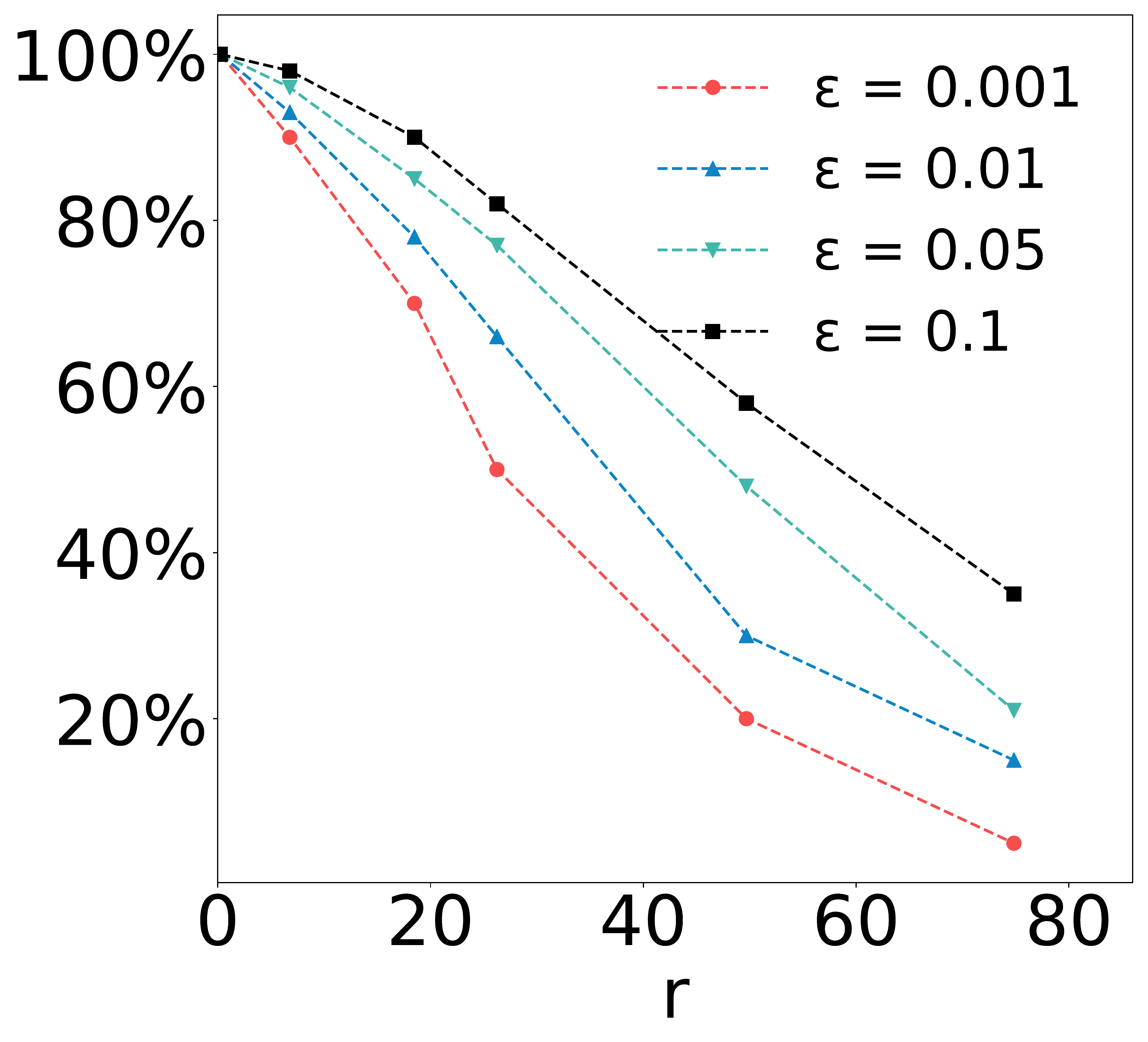}
		\caption{EWRD}
		\label{EWRD:sub2}
	\end{subfigure}	
	\caption{
		Comparing the robustness and $\varepsilon$-robustness curves of the same neural network on CIFAR-10 ($p=\infty$). $x$-axis is the given radius $r$ and $y$-axis is the percent of regions reported to be robust/$\varepsilon$-robust.
	}
	\label{ERvsEW}
\end{figure}

\begin{figure}
	\begin{subfigure}{0.23\textwidth}
		\includegraphics[width=1\linewidth]{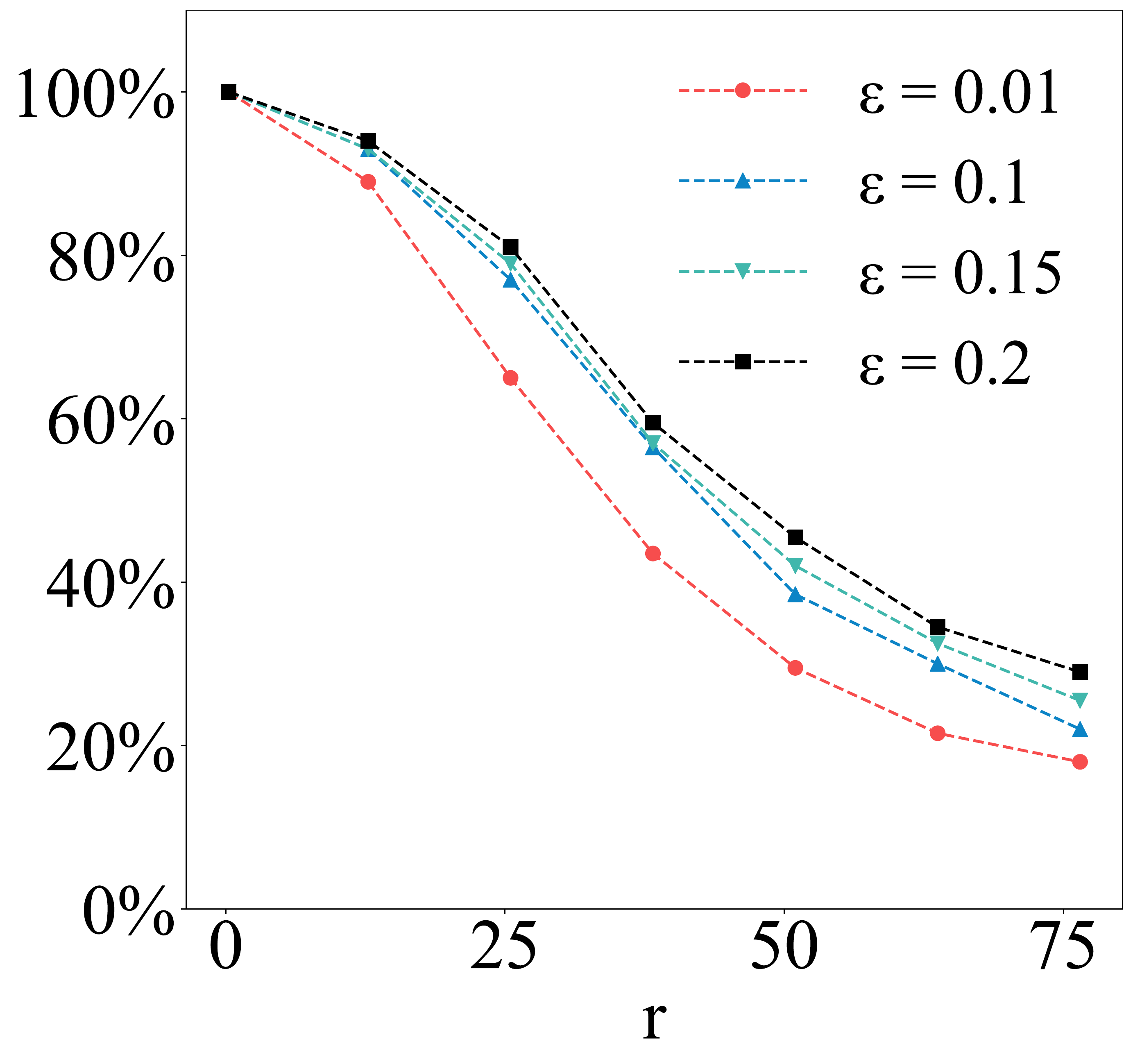}
		\caption{ResNet18}
		\label{cifa:sub1}
	\end{subfigure} 
	\begin{subfigure}{0.23\textwidth}
		\includegraphics[width=\linewidth]{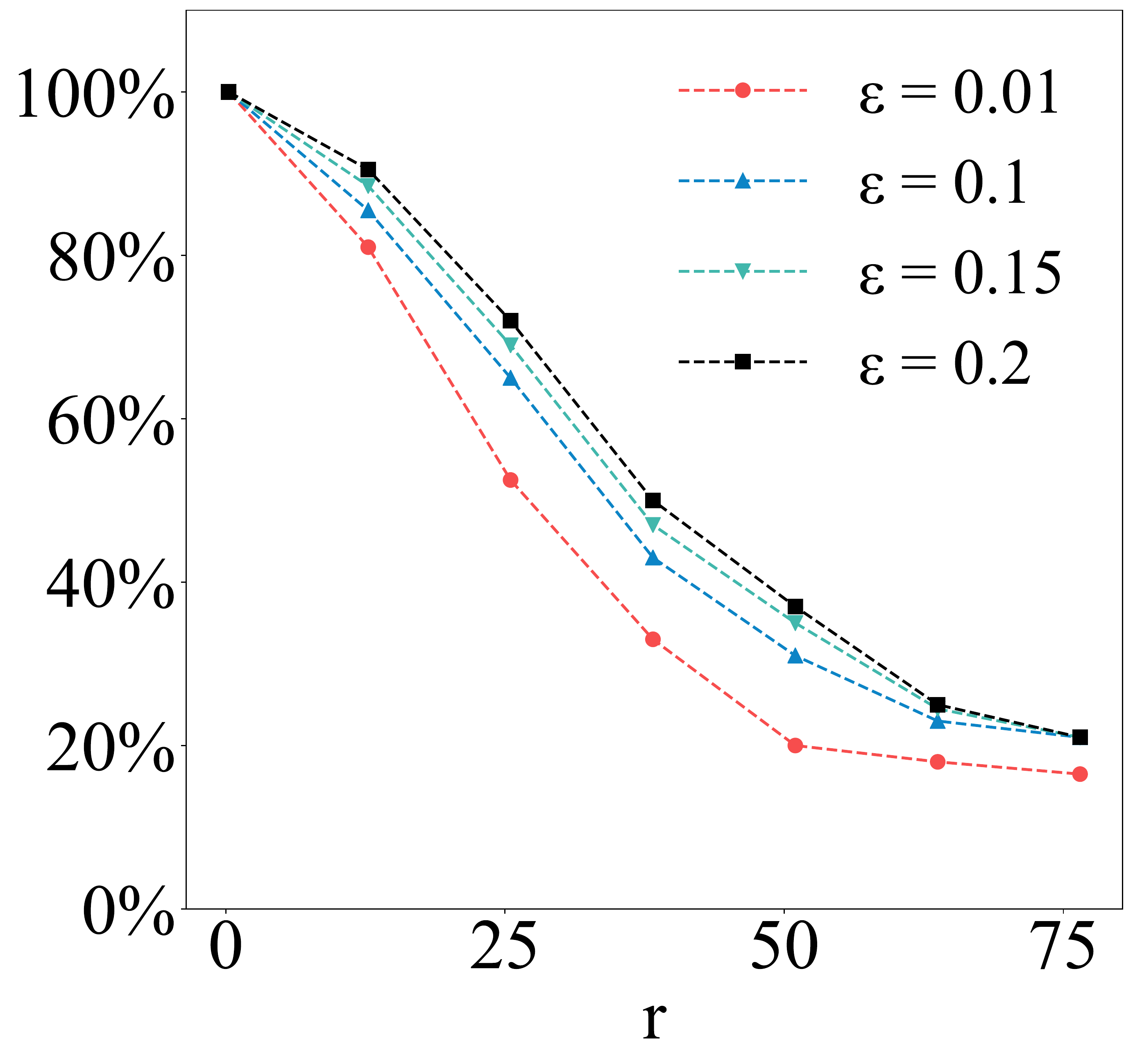}
		\caption{RegNetX}
		\label{cifa:sub2}
	\end{subfigure}
	
	\begin{subfigure}{0.23\textwidth} 
		\includegraphics[width=\linewidth]{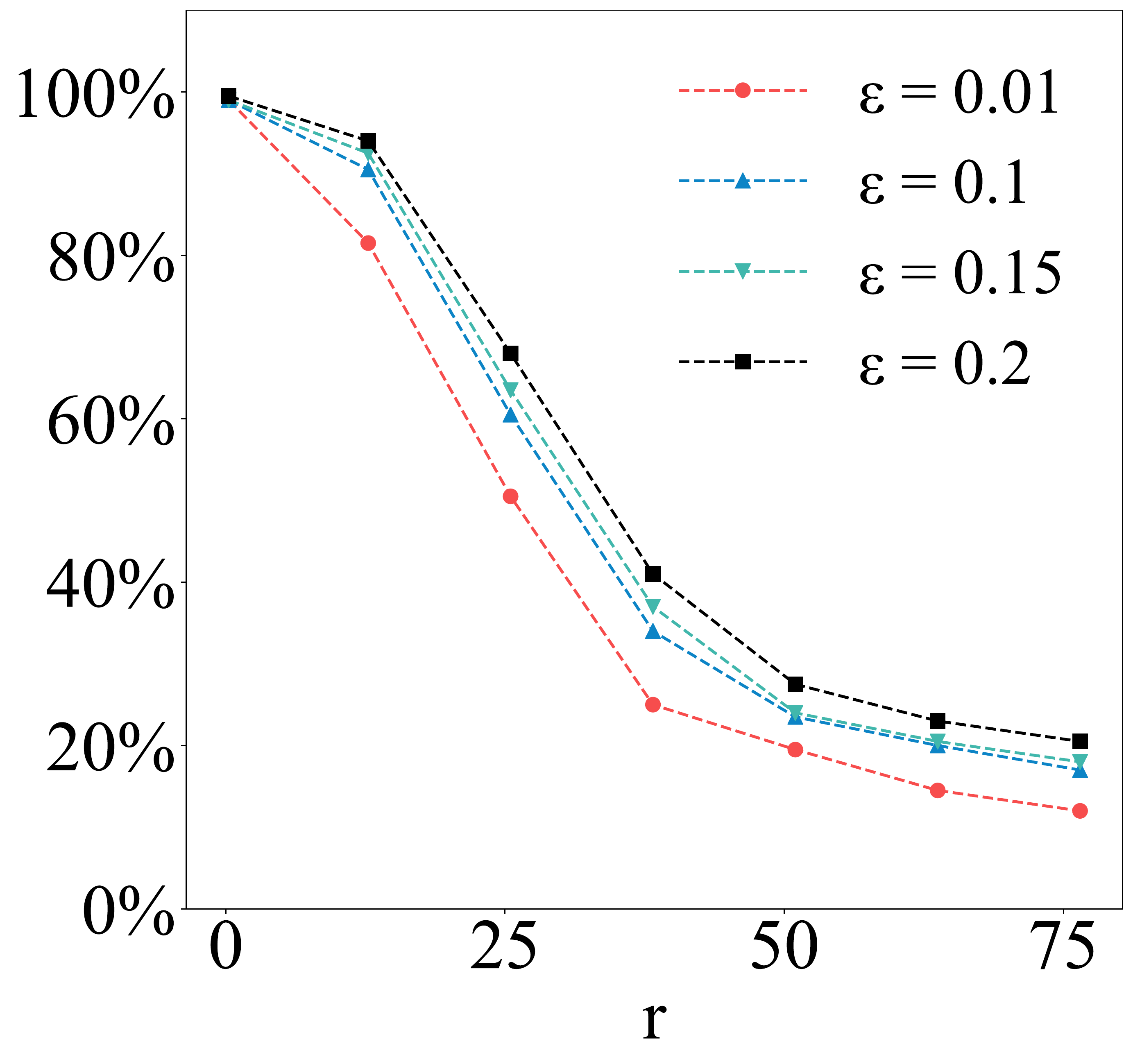}
		\caption{DenseNet121}
		\label{cifa:sub3}
	\end{subfigure}
	\begin{subfigure}{0.23\textwidth}
		\includegraphics[width=\linewidth]{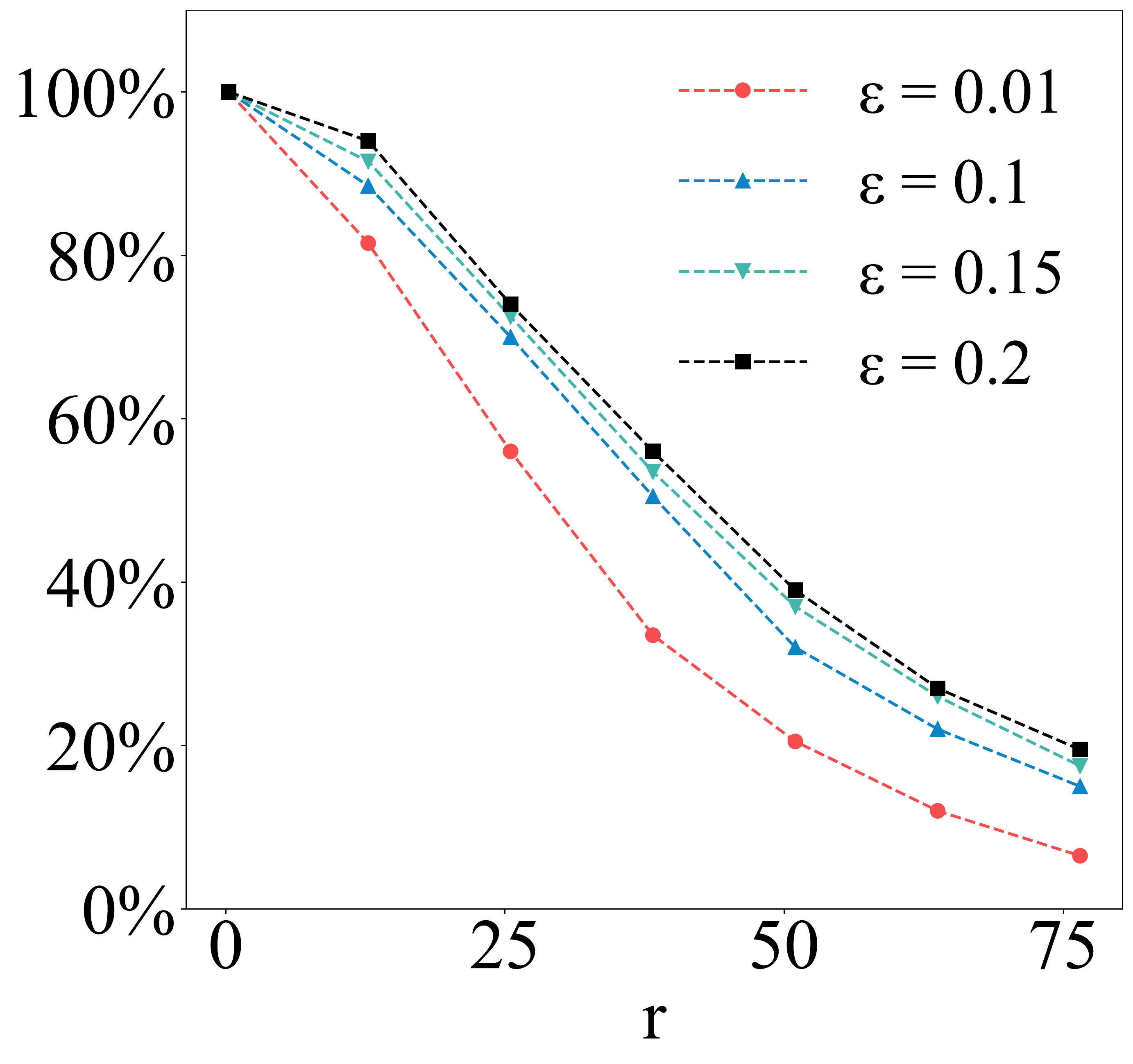}
		\caption{DPN92}
		\label{cifa:sub4}
	\end{subfigure}
	\caption{
		The $\varepsilon$-robustness curves of different neural networks on CIFAR-10 ($p$=$\infty$). $x$-axis is the given radius $r$ and $y$-axis is the percent of regions reported to be $\varepsilon$-robust.}
	\label{cifa:verified}
\end{figure}

We also compare the reliability performance under different settings of norm ($p = \{1,2,\infty\}$) and apply EWRE to pretrained networks in Pytorch (VGG19 \cite{SimonyanZ14a} and AlexNet \cite{KrizhevskySH17} on ImageNet) as shown in Figure \ref{p:total}. With the increase of dimension $n$, we have:
\begin{equation*}
\lim\limits_{n \rightarrow \infty} \frac{Vol(B_1(x_*, r))}{Vol(B_2(x_*, r))}=0\ \ and \\ \lim\limits_{n \rightarrow \infty} \frac{Vol(B_2(x_*, r))}{Vol(B_{\infty}(x_*, r))}=0
\end{equation*}
So, the values of $\varepsilon$-robustness radius decrease greatly as $p$ increases in $\{1,2,\infty\}$. However, their changing trends are almost the same.

\begin{figure}
	\begin{subfigure}{0.155\textwidth} 
		\includegraphics[width=1\linewidth]{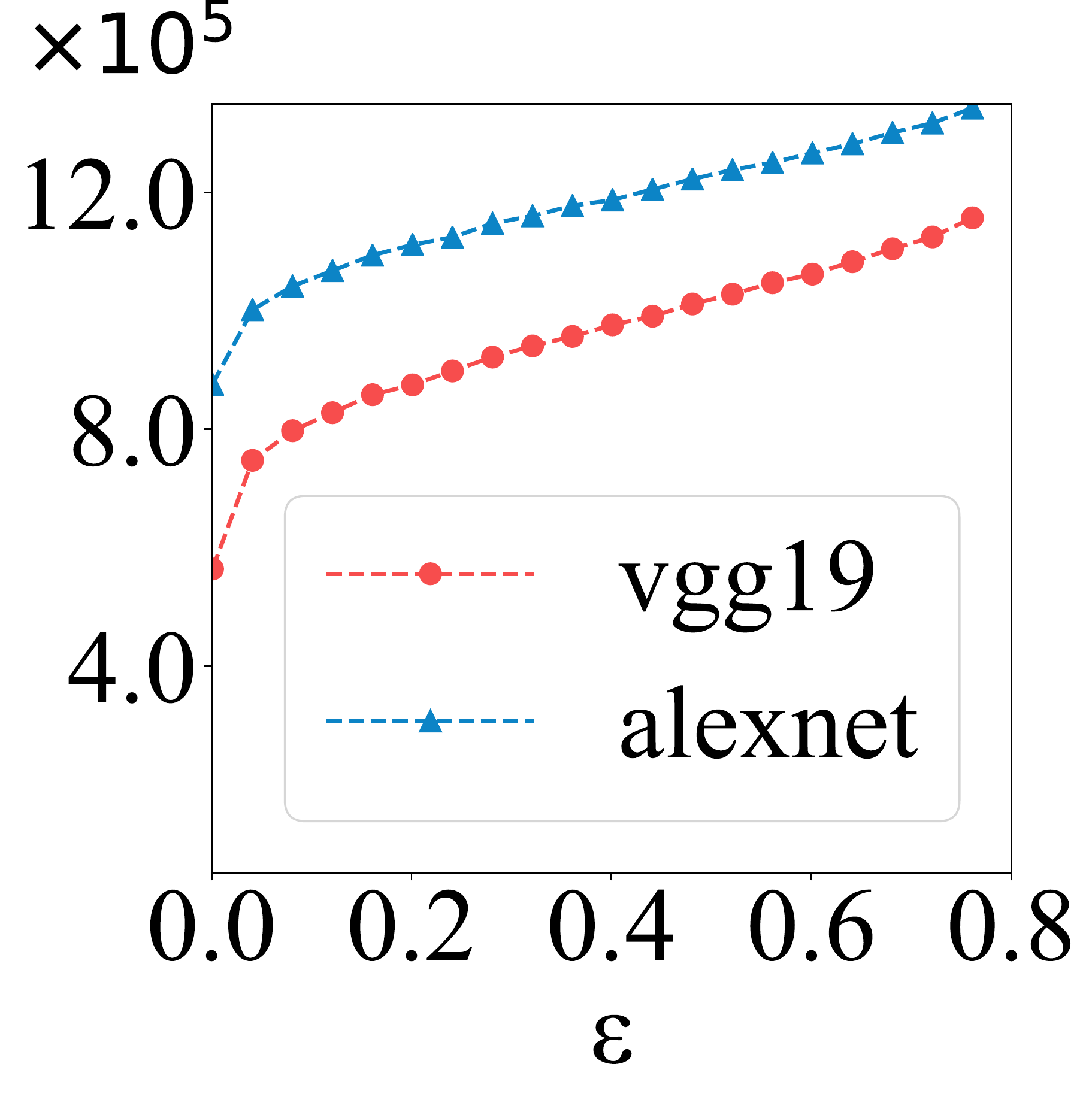}
		\caption{$\ell_{1}\ (p=1)$}
		\label{p1:sub1}
	\end{subfigure}
	\begin{subfigure}{0.155\textwidth} 
		\includegraphics[width=\linewidth]{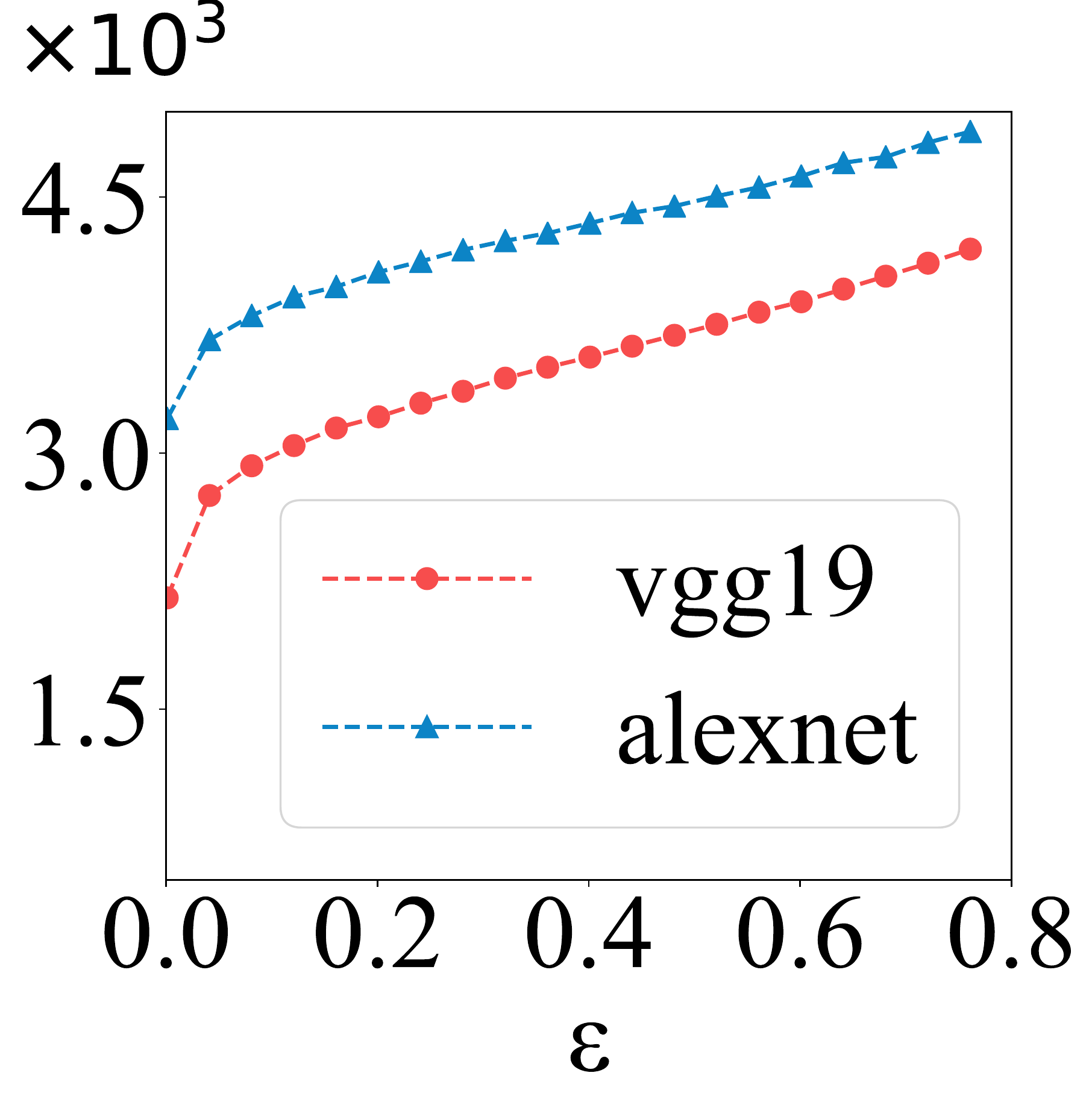}
		\caption{$\ell_{2}\ (p=2)$}
		\label{p2:sub2}
	\end{subfigure}
	\begin{subfigure}{0.155\textwidth} 
		\includegraphics[width=\linewidth]{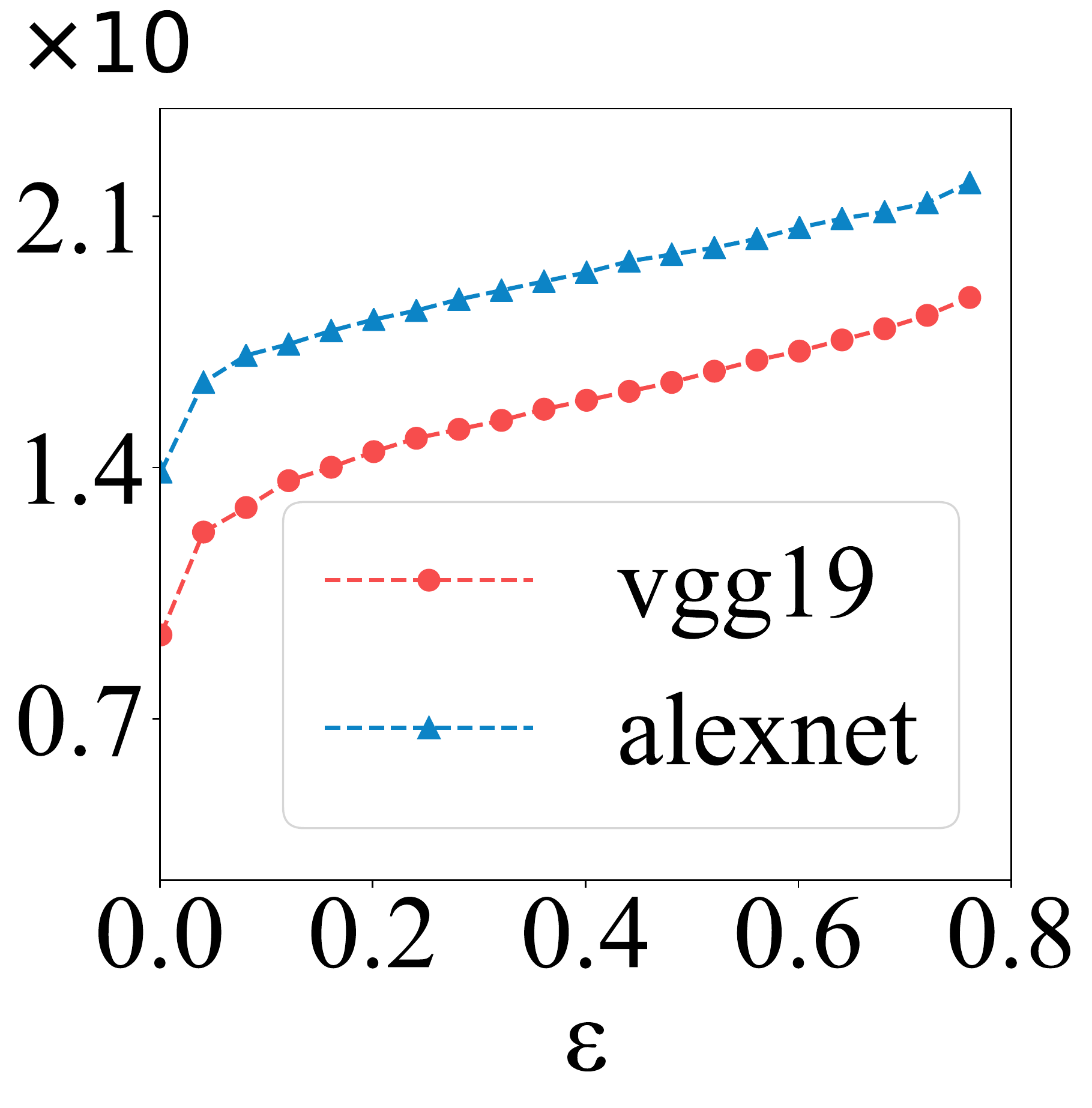}
		\caption{$\ell_{\infty}\ (p=\infty)$}
		\label{p3:sub3}
	\end{subfigure}	
	\caption{
		\label{p:total}
		(ImageNet) The $\varepsilon$-robustness radii ($y$-axis) of the same input point for VGG19 and AlexNet while the values of $\varepsilon$ vary from $0.001$ to $0.8$.
	}
\end{figure}

\subsection{$\varepsilon$-Robustness \& The Quality of NNs}
\textbf{Q2:} \emph{Can $\varepsilon$-robustness be used to reveal some potential problems about the quality of neural networks?}

In this experiment, we study whether $\varepsilon$-robustness analysis results can reveal the quality differences among the NNs with similar structure and reported accuracy. We trained 5 convolutional networks with almost the same structure on MNIST and they report almost the same accuracy on the test set.
\begin{itemize}
	\item Conv-Base achieves 98.97\% accuracy. It is a well-trained baseline model consisting of two convolutional layers, two fully connected layers, and 176,650 neurons in all.
	\item Conv-Overfit achieves 98.88\% accuracy. We overtrain the Cov-Base so that the training error is close to 0. It simulates the overfitting situation.
	\item Conv-NoPooling achieves 99.03\% accuracy. We remove the max-pooling layer which is designed to improve the robustness of neural networks.
	\item Conv-NoNorm achieves 98.41\% accuracy. We remove the normalization operation. It simulates a situation where a programmer forgot it.
	\item Conv-Imbalance report 98.66\% accuracy. We remove 90\% training and testing data in class `5'. It simulates the situation that a network is trained and tested on an imbalanced data set. 
\end{itemize}	
From the view of their reported accuracies on the test set, it is difficult to distinguish their quality differences. We study their $\varepsilon$-robustness performances further. We set $\varepsilon=0.02$. Figure \ref{fig:total} shows the results of $0.02$-robustness radii evaluated by EWRE. From it, we observe that: (1) The mean values of $0.02$-robustness radius on the Conv-Overfit, Conv-NoPooling, and Conv-NoNorm are smaller than that of the baseline model, Conv-Base. For the other values of $\varepsilon$, the relative relationship is consistent and this has been shown in the previous experiment. (2) The $0.02$-robustness radii decrease most by lack of the normalization operation (Conv-NoNorm). Usually, normalization is the first step for proceeding with various computer vision tasks. In some sense, our $\varepsilon$-robustness metric can provide evidence to support that it indeed plays an important role. 

Figure \ref{fig:Unbalance} presents the scatter plots of $0.02$-robustness radius $r$ for Conv-Imbalance and the images of the same class are rearranged together. For class `5', the $0.02$-robustness radii of points are distributed in the significantly lower value area and its mean value (marked by a triangle) is the smallest compared with other classes. It demonstrates that the unbalanced experimental situation can decrease the $\varepsilon$-robustness radius of the neural network for the unbalanced class. Our method has the potential to detect such problems.


\begin{figure}
	\begin{subfigure}{0.23\textwidth}
		\includegraphics[width=1\linewidth]{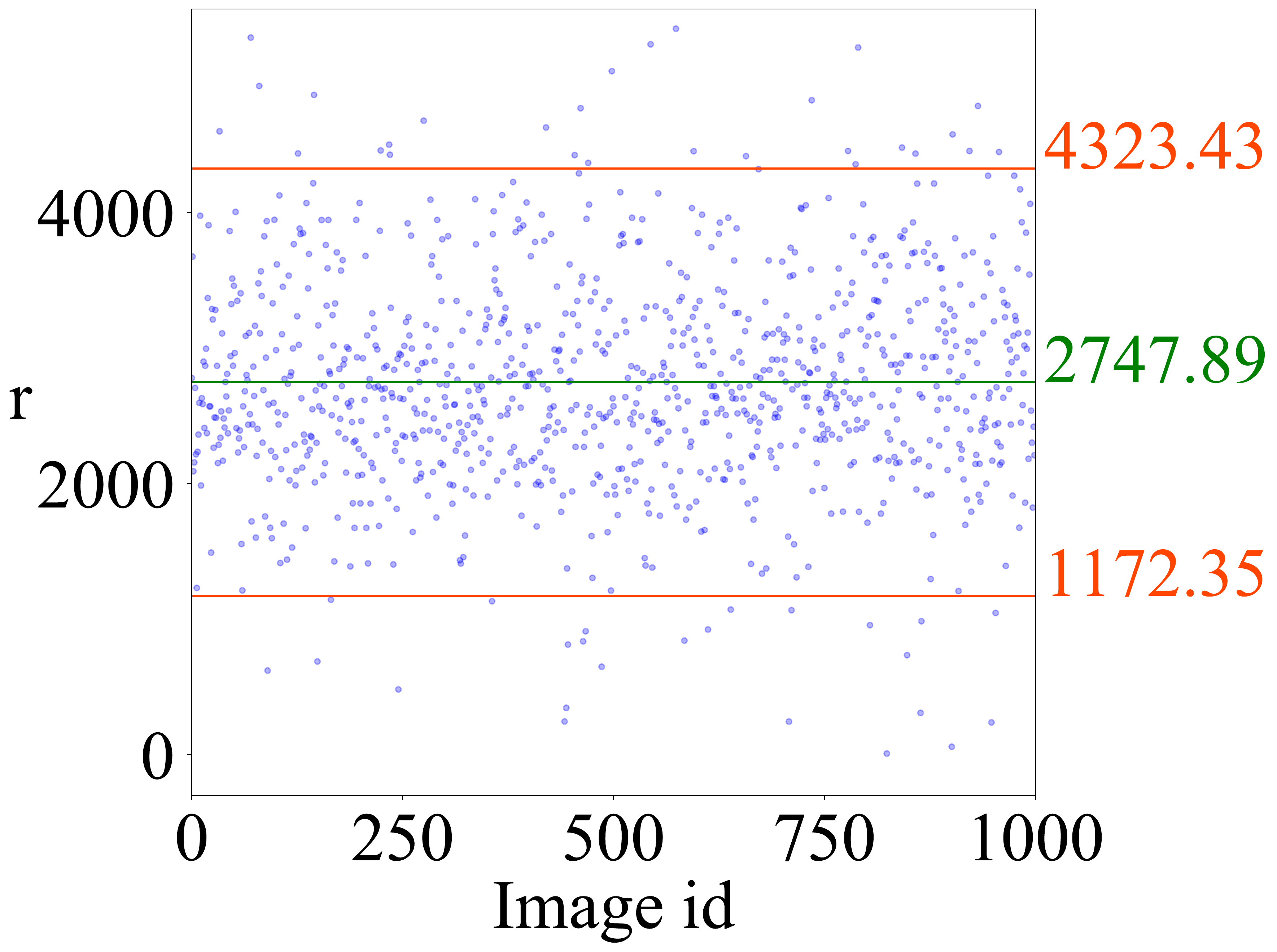}
		\caption{Conv-Base}
		\label{mnist:sub1}
	\end{subfigure}   
	\begin{subfigure}{0.23\textwidth}
		\includegraphics[width=\linewidth]{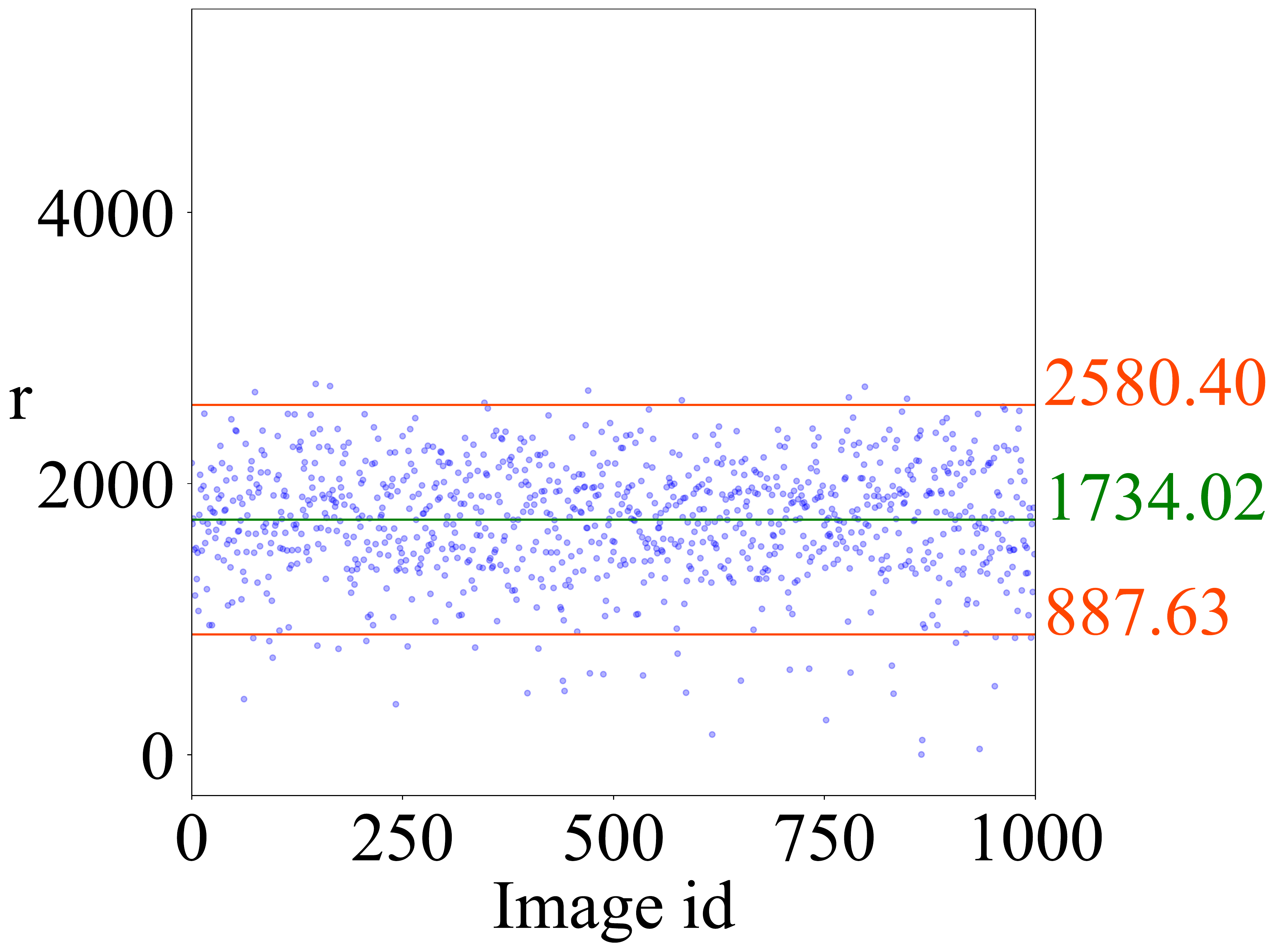}
		\caption{Conv-Overfit}
		\label{mnist:sub2}
	\end{subfigure}
	
	\begin{subfigure}{0.23\textwidth} 
		\includegraphics[width=\linewidth]{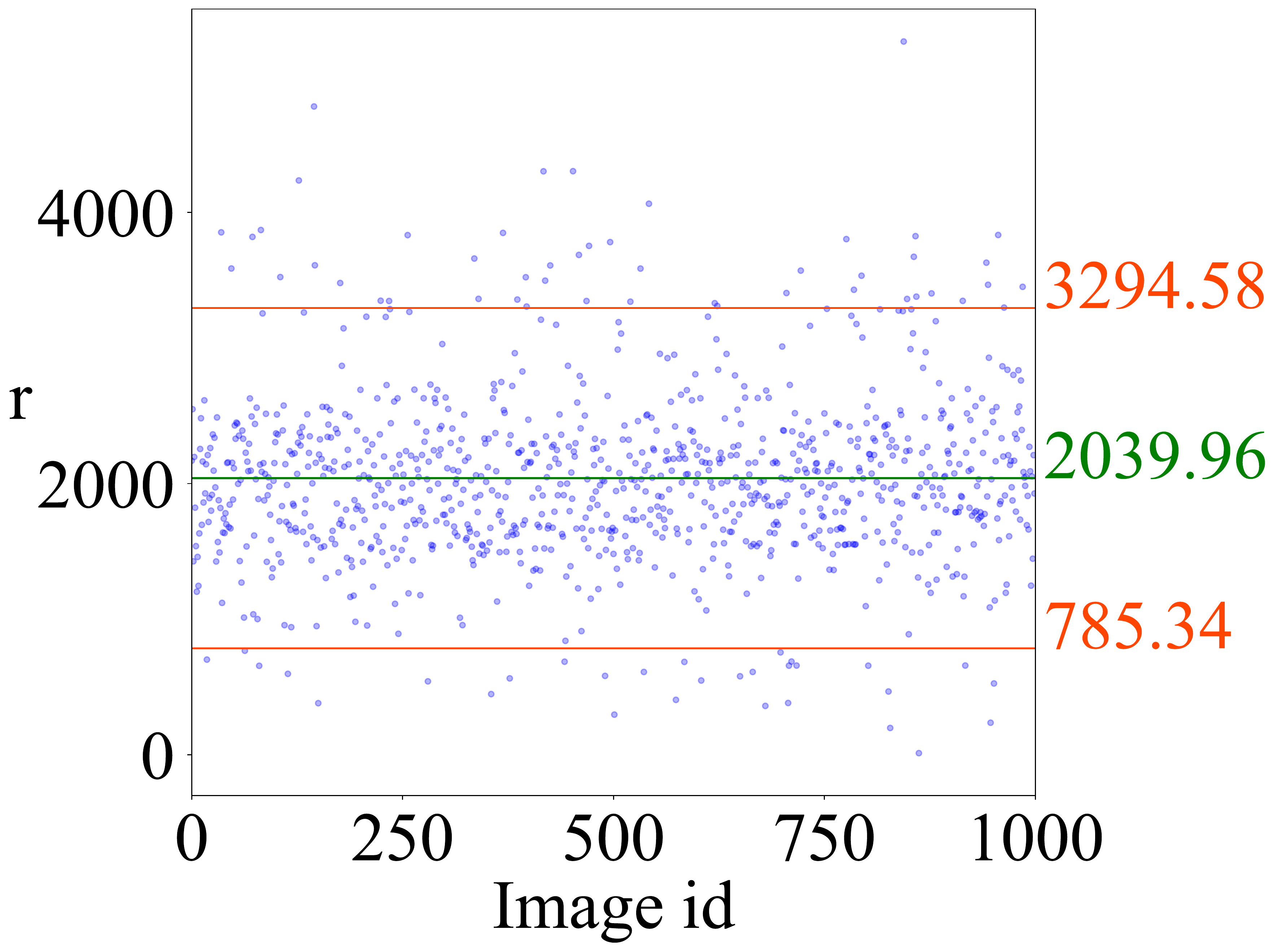}
		\caption{Conv-NoPooling}
		\label{mnist:sub3}
	\end{subfigure}
	\begin{subfigure}{0.23\textwidth} 
		\includegraphics[width=\linewidth]{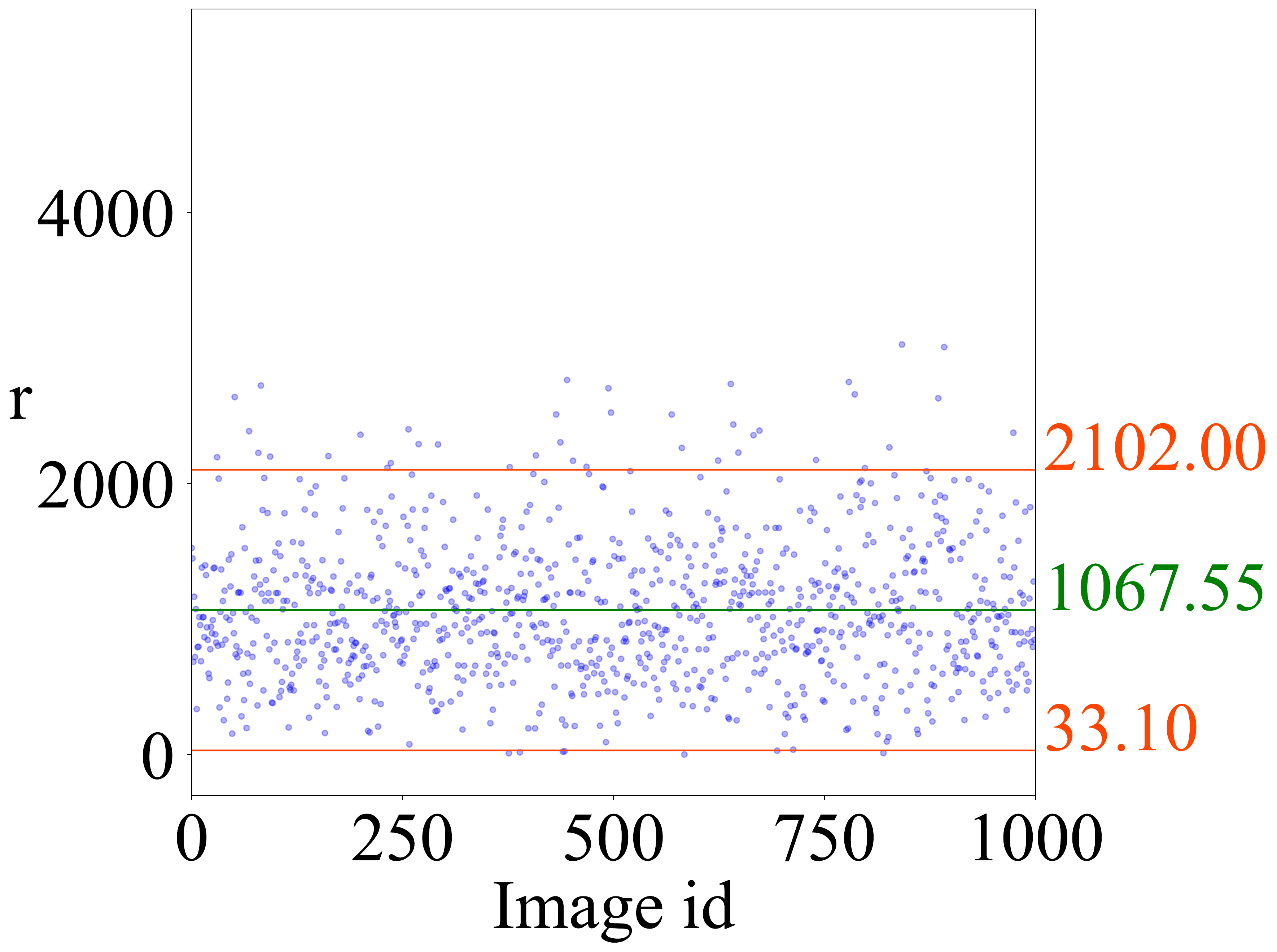}
		\caption{Conv-NoNorm}
		\label{mnist:sub4}
	\end{subfigure}
	\caption{
		$\varepsilon$-robustness radii distribution on 1000 (random 100 for each class) test images of MNIST for four neural networks ($p$=$2$). Green (middle) lines mark means. Upper (lower) red lines mark the mean plus (minus) twice the standard deviation. }
	\label{fig:total}
\end{figure}

\begin{figure}
	\begin{subfigure}{0.23\textwidth} 
		\includegraphics[width=1\linewidth]{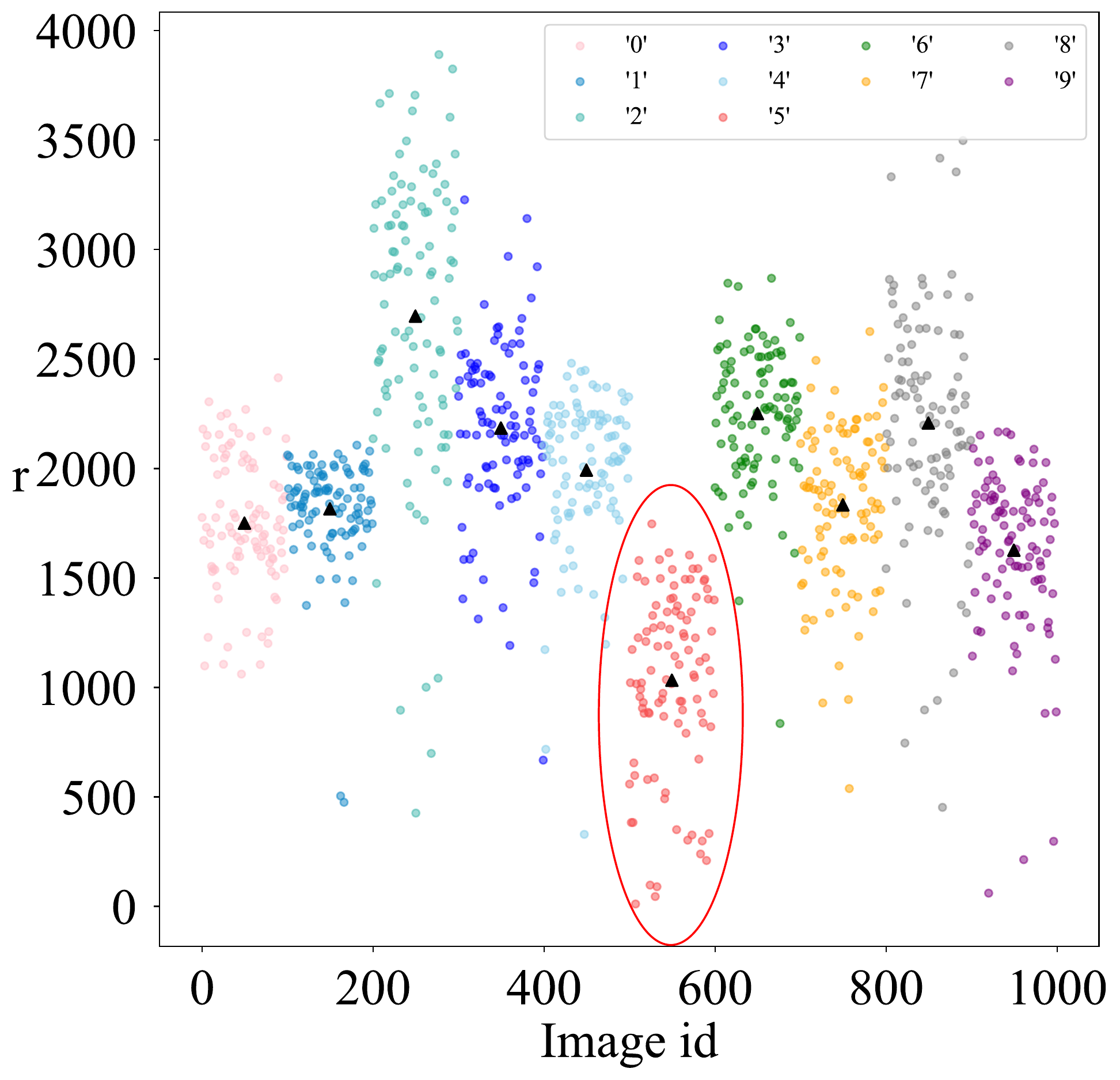}
		\caption{Conv-Imbalance (p=2)}
		\label{unbanlance1}
	\end{subfigure} 
	\begin{subfigure}{0.23\textwidth}
		\includegraphics[width=\linewidth]{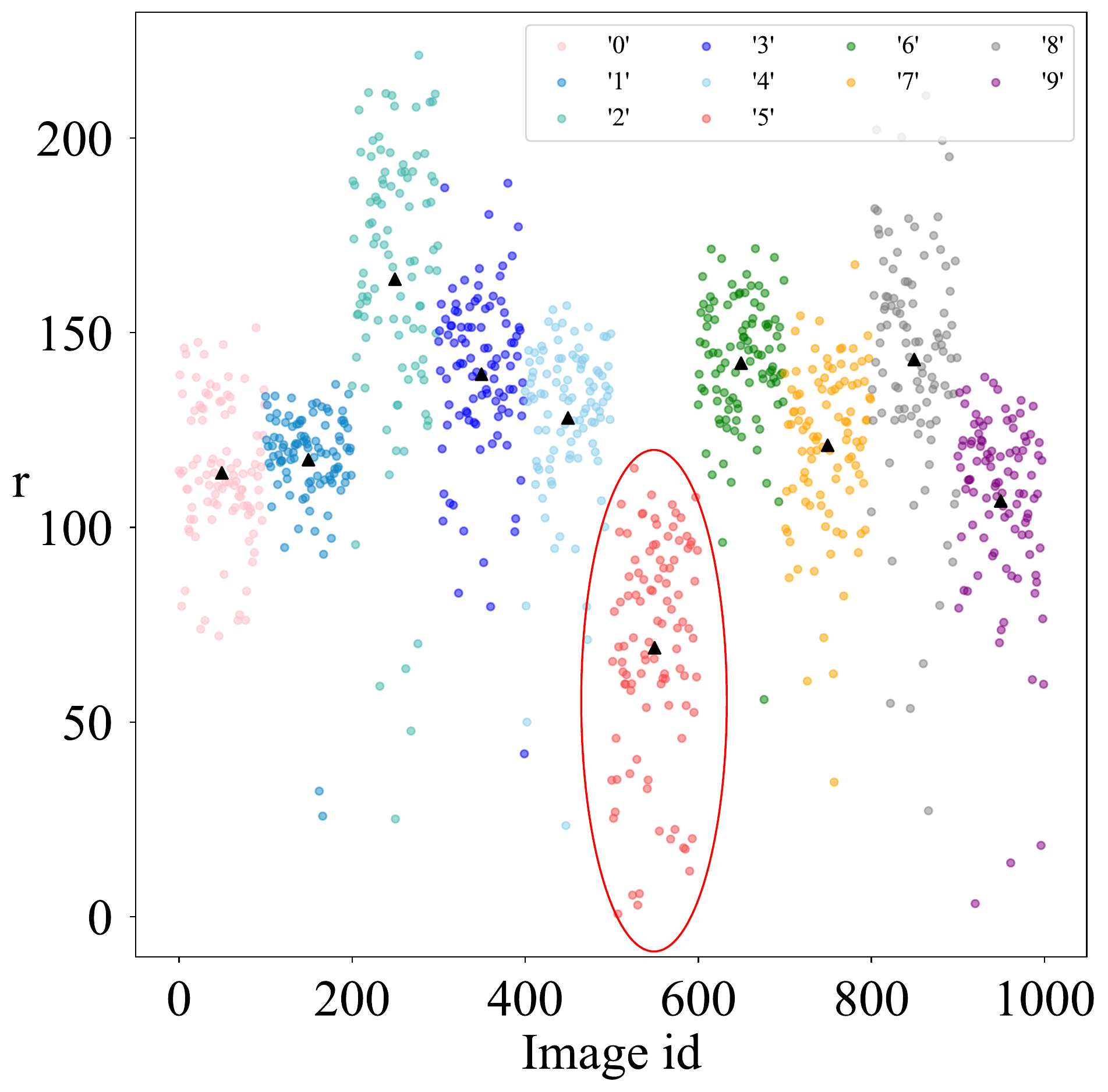}
		\caption{Conv-Imbalance (p=$\infty$)}
		\label{unbanlance2}
	\end{subfigure}
	\caption{
		$\varepsilon$-robustness radii on 1000 (random 100 for each class) test images of MNIST for the neural network trained on imbalanced data set. Each color represents a class. Points in class `5' are in red color and circled. Black triangles denote their means.}
	\label{fig:Unbalance}
\end{figure}

\subsection{$\varepsilon$-Robustness of Misclassified Points}
\textbf{Q3:} \emph{How about the $\varepsilon$-robustness of the regions around misclassified points or adversarial examples?}

Conventional robustness analysis can not be applied to the regions around misclassified points adversarial examples because these regions are not robust according to the definition. In practical application, an input is possibly in the regions around some misclassified points, which are also worthy to be analyzed. Thus, we designed the experiments to analyze the $\varepsilon$-robustness of four popular DNNs (ResNet18, RegNetX, DenseNet121 and DPN92) in the regions around misclassified points for CIFAR-10.

First, we evaluate the average $\varepsilon$-robustness radii of these four neural networks via EWRE for correctly-classified points and the results are shown in Table \ref{Tad}. 
Then, for each neural network, we run EWRD on all misclassified points in the test set and assign $r $ to the average $\varepsilon$-robustness radius. Figure \ref{fig:bar} shows the percent of $\varepsilon$-robust regions around misclassified points. We observe that the performance of neural networks in these regions is not necessarily very poor. Take RegNetX as an example,  4.18\% adversarial examples satisfy $0.01$-robustness with perturbation radius 52.86. That means although RegNetX commits errors at the centers of these regions, it often (with probability>99\%) works well in the neighborhoods. Besides, as we expected, there will be more regions satisfying $\varepsilon$-robustness if we can tolerate more adversarial examples. 

It is worth mentioning that EWRE can not be directly applied to the misclassified points, because it will invalidate the premise of binary search. Although the decremental search (gradually decreasing the radius) can be applied in theory, its practical efficiency is very low. Therefore, for such misclassified points, the radius should be set in advance according to application requirements.

\begin{table}
	\caption{(CIFAR-10) The average $\varepsilon$-robustness radii ($\ell_{\infty}$).}
	\begin{small}
		\begin{tabular}{lclrrr} 
			\toprule
			\textbf{Networks}&
			$\varepsilon=0.01$ &$\varepsilon=0.1$ &$\varepsilon=0.15$ &$\varepsilon=0.20$\\
			\midrule
			ResNet18&59.47&68.53&71.24&73.49\\
			RegNetX&52.86&67.45&70.27&72.68\\
			DenseNet121&41.17&62.36&64.31&61.75\\
			DPN92&34.44&49.19&54.87&59.83\\
			\bottomrule
		\end{tabular}
	\end{small}
	\label{Tad}
\end{table}	

\begin{figure}
	\includegraphics[width=0.75\linewidth]{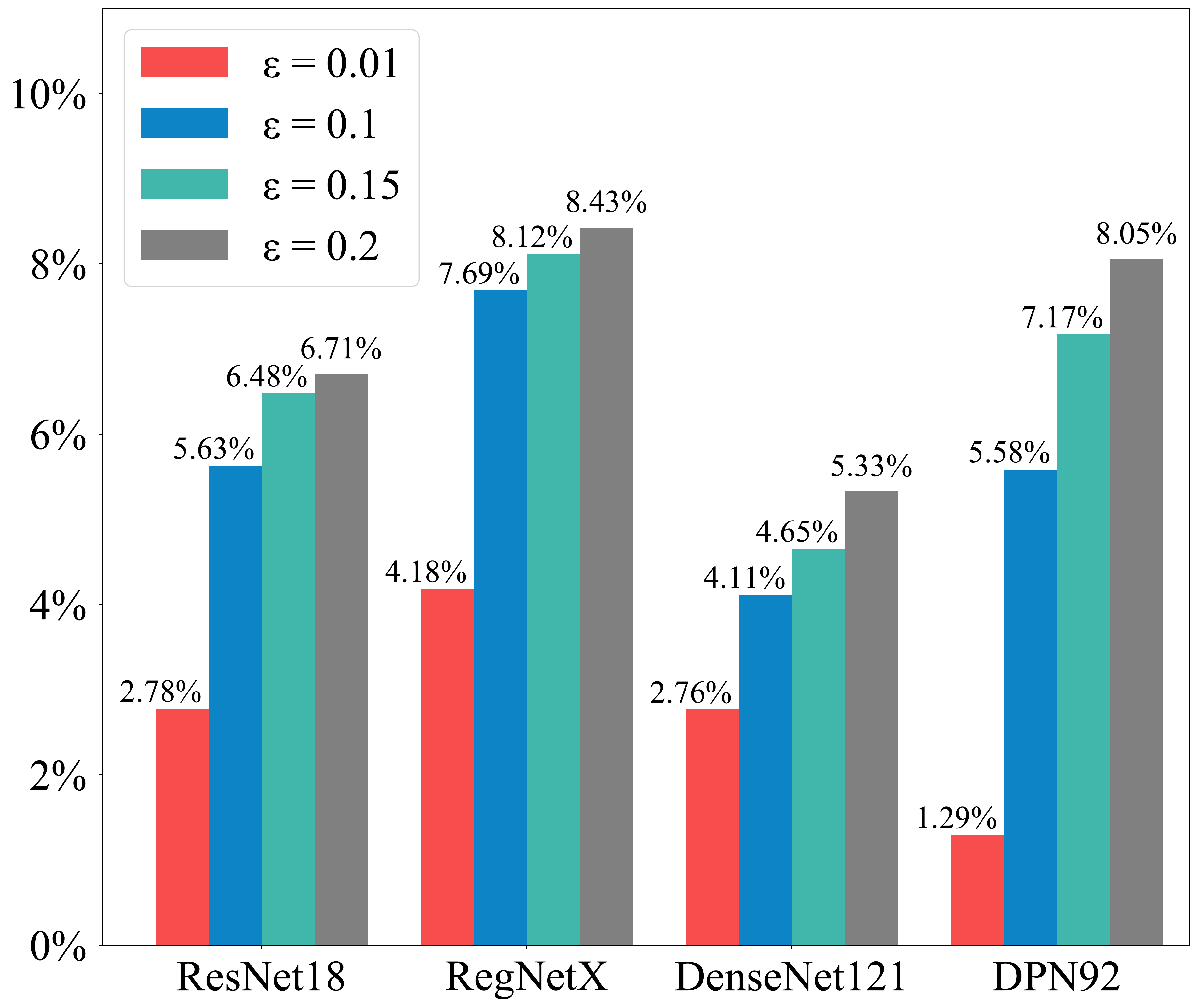}
	\caption{The percentage of regions reported to be $\varepsilon$-robustness around misclassified points.}
	\label{fig:bar}
\end{figure}

\subsection{Efficiency}
\textbf{Q4:} \emph{How about the efficiency of EWRD and EWRE?}

In this section, we test the efficiency of our algorithms for various networks and parameter settings. Firstly, we compared the efficiency of EWRD with ERAN on the same networks only for reference as they focus on different problems. When the value of $\varepsilon$ is very small, the $\varepsilon$-robustness decision problem and conventional robustness verification problem are very close. So we compare the running time via setting $\varepsilon \in \{0.0001,0.001\}$ in our experiment. Table \ref{T2} shows the results of the average running time. We observe that EWRD performs much more efficiently than ERAN. For conventional robustness verification problem, the soundness methods may encounter efficiency problems for large-scale networks and datasets. 

\begin{table}
	\centering
	\caption{The average running time of EWRD and ERAN on different neural networks. ($p$=$\infty$)}
	\resizebox{\linewidth}{!}{ 
		\begin{tabular}{lclrrr} 
			\toprule
			\multirow{2}{*}{\textbf{Networks}}&
			\multicolumn{2}{c}{Information} &\multicolumn{2}{c}{EWRD(s)} &\multirow{2}{*}{ERAN(s)}\\
			\cmidrule(lr){2-3}  \cmidrule(lr){4-5} 
			&\#Layers&\#Neurons&$\varepsilon=0.0001$&$\varepsilon=0.001$\\
			\midrule
			FullyCon-MNIST&4&3,072&4.80&0.38&26.90\\
			ConvBig-MNIST&6&48,064&2.68&0.55&3.03\\
			ConvSuper-MNIST&6&88,544&5.16&0.63&32.04\\
			FullyCon-CIFAR10&7&6,144&5.87&0.51&529.89\\
			ConvBig-CIFAR10&6&62,464&5.03&0.51&94.83\\
			ResNet18-CIFAR10&18&558,080&40.07&3.34&2971.37\\
			\bottomrule
		\end{tabular}
	}
	\label{T2}
\end{table}

Next, we test the average running time (seconds) of EWRD and EWRE for different $\varepsilon$ values on different networks and data sets which are shown in Table \ref{T1}. We can see that the dimensionality of inputs has the greatest impact on running time. Even for the Alexnet on ImageNet,  which contains much fewer layers compared with DenseNet121 on CIFAR-10, EWRD and EWRE take much more running time. With the increase of $\varepsilon$, the running time decreases first and then increases, which is consistent with the functional relationship between $N$ and $\varepsilon$ in Algorithm 1.

\begin{table*}
	\caption{The average running time of EWRD and EWRE on different neural networks. ($p$=$\infty$)}
	\resizebox{\textwidth}{!}{ 
		\begin{tabular}{lcrrrrrrrrr} 
			\toprule
			\multirow{2}{*}{\textbf{Networks}}&
			\multicolumn{2}{c}{Information} & \multicolumn{2}{c}{\textbf{$\varepsilon=0.001$}} &\multicolumn{2}{c}{\textbf{$\varepsilon=0.01$}} &\multicolumn{2}{c}{\textbf{$\varepsilon=0.10$}} &\multicolumn{2}{c}{\textbf{$\varepsilon=0.20$}}\\
			\cmidrule(lr){2-3} \cmidrule(lr){4-5}  \cmidrule(lr){6-7} \cmidrule(lr){8-9} \cmidrule(lr){10-11}
			&\#Layers&Data sets&EWRD(s) &EWRE(s)&EWRD(s)&EWRE(s)&EWRD(s)&EWRE(s)&EWRD(s)&EWRE(s)\\
			\midrule
			FullyCon&4&MNIST&0.31&3.56&0.02&0.35&0.41&4.97&1.38&16.19\\
			Conv1&4&MNIST&0.73&8.60&0.06&0.79&0.98&11.63&3.16&37.87\\
			ResNet18&18&CIFAR-10&3.34&40.85&0.30&3.40&4.05&50.12&12.89&153.40\\
			RegNetX&20&CIFAR-10&4.03&46.84&0.37&4.33&5.47&63.90&17.52&208.30\\
			DPN92&92&CIFAR-10&7.50&90.02&0.71&8.54&10.11&118.04&30.68&405.11\\
			DenseNet121&121&CIFAR-10&8.60&101.45&0.91&9.71&12.06&133.32&35.80&437.68\\
			Alexnet&8&ImageNet&34.2&393.34&3.38&38.86&44.73&525.13&165.50&1942.96\\
			VGG19&19&ImageNet&49.43&544.03&4.86&54.36&64.30&738.65&216.80&2706.32\\
			ResNet50 &50&ImageNet&46.31&462.19&4.42&45.35&63.30&602.64&227.88&2169.50\\
			\bottomrule
		\end{tabular}
	}
	\label{T1}
\end{table*}

We further study the effect of $\alpha$ and type $\beta$ on the execution efficiency of Algorithm 1. The smaller $\alpha$ and $\beta$ are, the more sampling points the algorithm needs. Figure \ref{AandBtime} shows that, with the exponential decrease of  $\alpha$ and $\beta$, the execution time of the algorithm does not increase exponentially which is attributed to the good property of the normal distribution. Although the sampling points $N \rightarrow \infty$ when $\alpha \rightarrow 0$ and $\beta \rightarrow 0$, this trend is not so fast. Besides, for different values of $p$, the execution efficiency of the sampling algorithm is also different as the time complexities are different as described in section \ref{Dsec}. For $p$=$1$, EWRE and EWRD consume the longest time compared with other norms.
\begin{figure}
	\begin{subfigure}{0.23\textwidth}
		\includegraphics[width=1\linewidth]{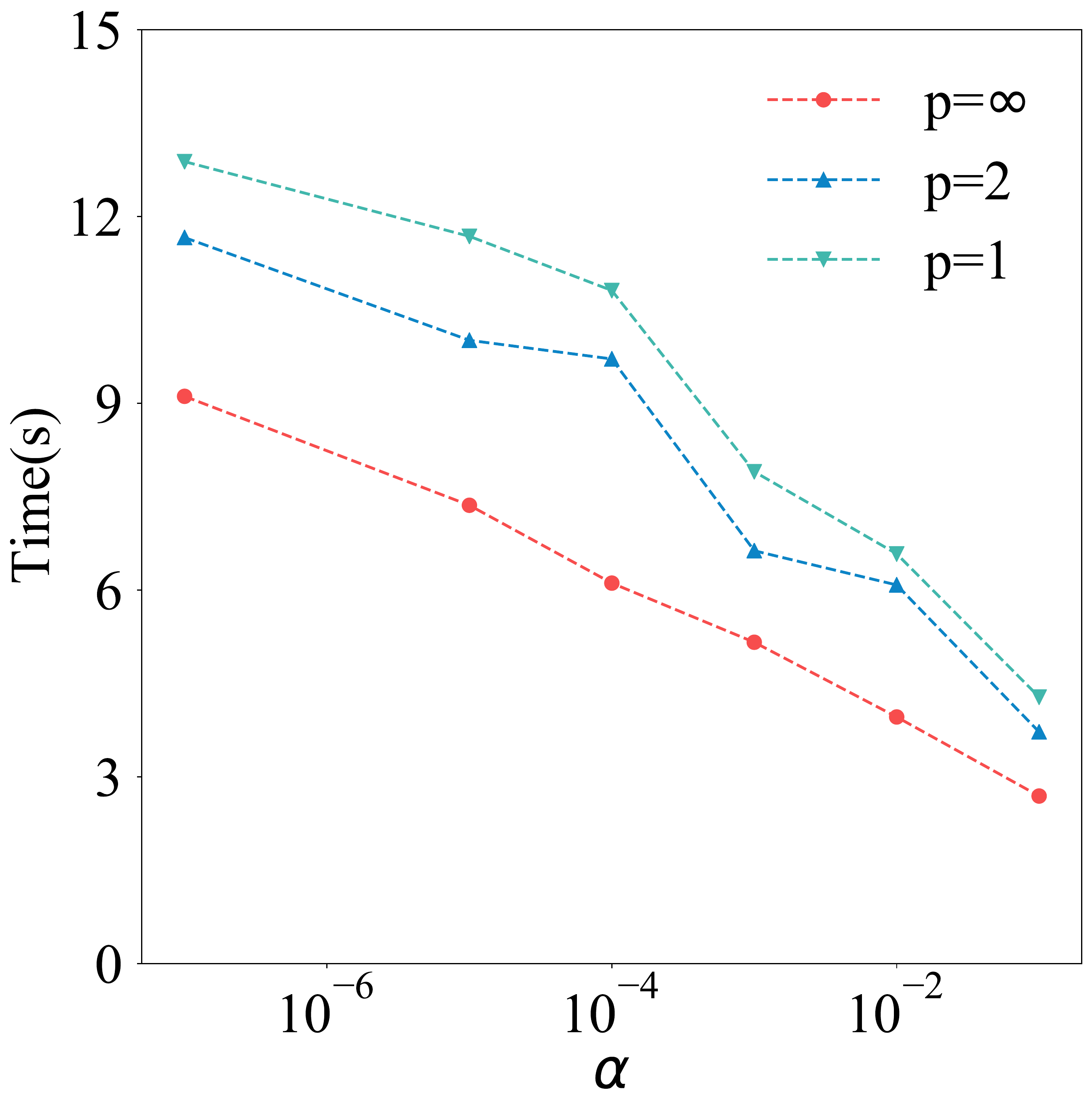}
		\label{alphatime}
	\end{subfigure}
	\begin{subfigure}{0.23\textwidth}
		\centering   
		\includegraphics[width=\linewidth]{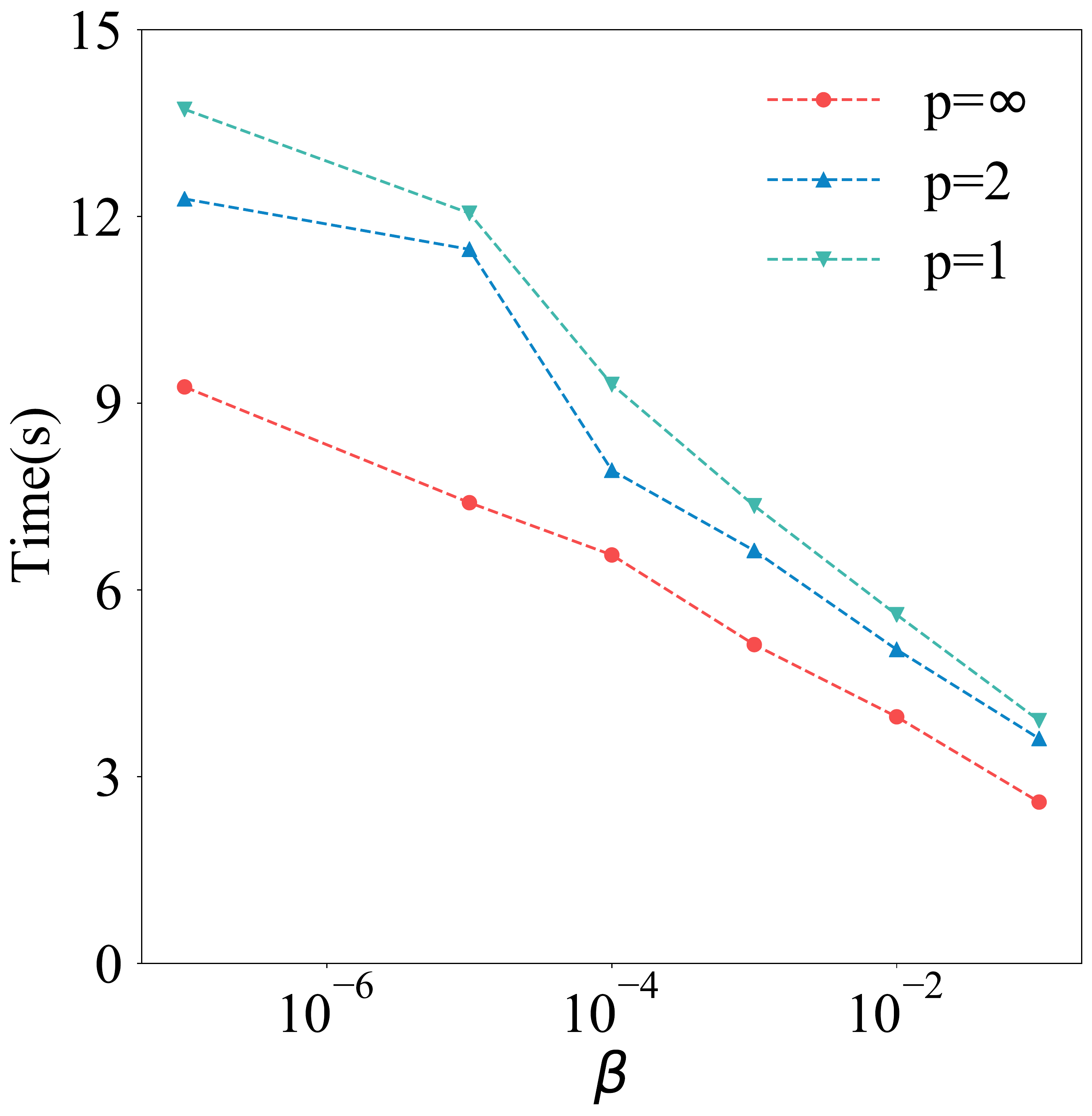}
		\label{betatime}
	\end{subfigure}
	\caption{
		\label{AandBtime}
		Comparing the running time of EWRD with respect to different values of $\alpha$ and $\beta$ on a given input. When $\alpha$ varies, $\beta=0.001$. When $\beta$ changes, $\alpha=0.001$. The $x$-axis is in logarithmic scale. Network: RegNetX. Data set: CIFAR-10. 
	}
\end{figure}

\subsection{Multi-labels Extension}
\textbf{Q5:} \emph{How about the $\varepsilon$-robustness in multi-labels extension?}

In practical application, a neural network can make a mistake but it does not mean a disaster. Figure \ref{fig:car} shows two examples in which ResNet18 recognizes a car as a truck and a plane respectively under random perturbations. When misclassifying a car as a truck, there may be no dangers for a self-driving system to make a decision to avoid the vehicle ahead. But it may be dangerous for the second case (misclassifying a car as a plane) as the self-driving system will make a wrong decision of no evasive action. Figure \ref{fig:omega} shows the values of the $\varepsilon$-robustness radius of a given input image (labeled `Car') on ResNet18 calculated by EWRE when $\Omega=\{`Car', `Truck'\}$. Compared with the case of $\Omega=\{`Car'\}$, the $\varepsilon$-robustness radius of the multi-label extension ($\Omega=\{`Car', `Truck'\}$) is significantly improved. When the $\varepsilon$-robustness radius is $42.21$, the risk of recognizing the car as a label out of $\Omega$ is less than $0.001$ under a random perturbation.
\begin{figure}
	\centering
	\includegraphics[width=0.9\linewidth]{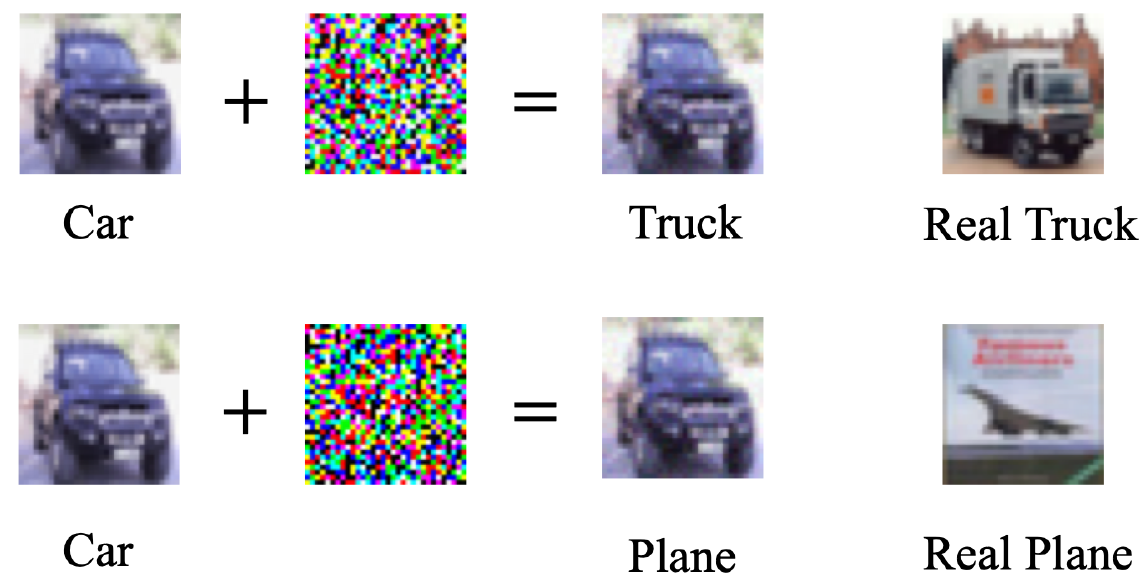}
	\caption{Adversarial examples on ResNet18 generated by adding random perturbation to images with label `Car'.}
	\label{fig:car}
\end{figure}

\begin{figure}
	\centering
	\includegraphics[width=0.8\linewidth]{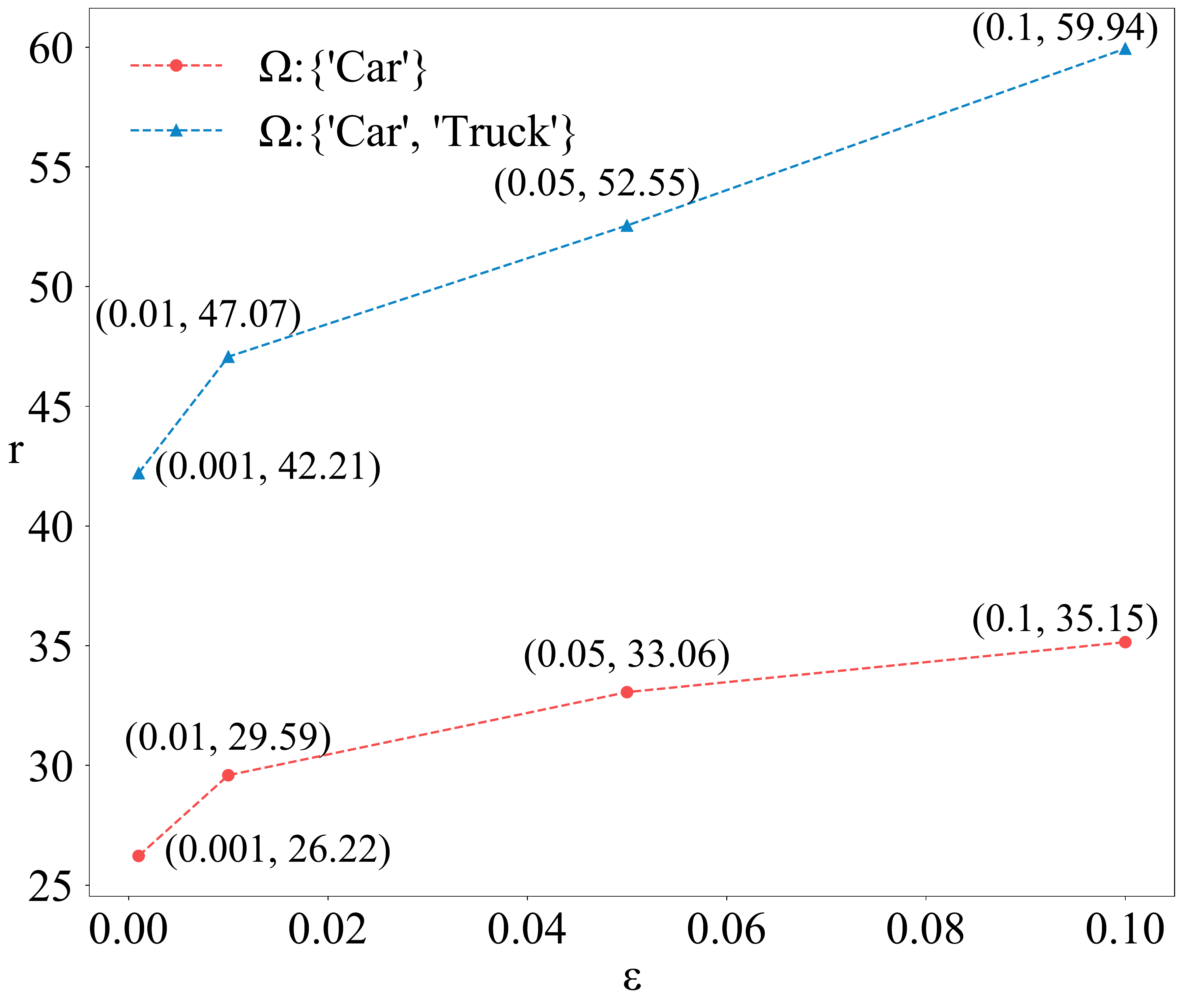}
	\caption{$\varepsilon$-robustness radii for a given image with the real label `Car' on ResNet18. ($p$=$\infty$)}
	\label{fig:omega}
\end{figure}

\section{Related Work}
In recent years, there have been a lot of works on certifying or evaluating the robustness of NNs.

Conventional robustness verification can be mainly divided into three categories. (1) Precise methods are based on Mixed Integer Linear Programming (MLP) \cite{FischettiJ18,DuttaJST18} or SMT \cite{KatzBDJK17,Ehlers17}. For specific types of NN, these methods have both completeness and soundness. It means that if the decision procedure returns ``robust'', the neural network is robust w.r.t. the specified perturbation radius, otherwise, there exist adversarial examples. However, these methods can only tackle small networks at present; (2) Incomplete methods which apply overapproximation techniques to scale up to medium-sized networks, which are sound but may fail to prove robustness even if it holds, such as abstract interpretation \cite{GehrMDTCV18,SinghGMPV18,SinghGPV19}, linear approximations\cite{WengZCSHDBD18,WongK18}, semidefinite relaxations \cite{RaghunathanSL18} or search space discretization \cite{HuangKWW17}. These methods suffer from a loss of precision when the neural networks are deep. Besides, they are limited to the types of networks and norms. (3) Estimation methods, including the method based on extreme value theory \cite{WengZCYSGHD18} and randomized smoothing \cite{CohenRK19,JiaCWG20}, can scale to large networks (e.g. ImageNet classifier). These methods have neither soundness nor completeness which means when the decision procedure returns ``robust'', the network can be either robust or have adversarial examples. Although randomized smoothing provides a probability guarantee, the radius which can be verified around a point vanishes when $p>2$ as recent studies \cite{Hayes20,Kumar0GF20} show.

In quantitative analysis or verification, some methods are proposed to estimate the proportion of inputs violating a property. So they also can be applied to estimate the proportion of adversarial examples. Webb et al. \cite{WebbRTK19} use a multi-level splitting sampling method to estimate the proportion of adversarial examples. However, it has no formal guarantees about the estimation results. In 2019, Baluta et al.\cite{BalutaSSMS19} proposed a white-box verification approach based on the model counting technique. It can be used to estimate the interval of adversarial examples' numbers with a formal guarantee. However, this approach is limited to binarized neural networks and takes 20 hours to verify robustness for a point when the network has more than 50, 000 parameters. These works focus on quantitative analysis (computing the number or proportion of adversarial examples) which is \#$P$-hard. Moreover, the estimated value can not be directly used to determine whether a system meets safety standards because the real value can be either over or below the threshold and the probability of making a wrong decision is unknown. After introducing $\varepsilon$, we reduce the computational complexity to $PP$. In this context, we can provide an efficient method to answer whether the proportion of adversarial examples is less than $\varepsilon$ with provable guarantees.

Our statistical approach to $\varepsilon$-robustness decision problem also can be regarded as statistical model checking \cite{ClarkeFLHJL08,ClarkeZ11}. Statistical model checking is traditionally used to reason about properties specified in a stochastic temporal logic for analyzing the behavior of complex stochastic systems. To our best knowledge, it has not been applied to analyze neural networks. Besides, there is no general method that can determine the sample size required for an arbitrary system \cite{AghaP18} and the time complexity of the decision procedure is related to the system and properties.
\section{Conclusion}
We formalize a robustness metric named $\varepsilon$-weakened robustness which can be used to analyze the reliability and stability of deep neural networks. Compared with the conventional robustness analysis, $\varepsilon$-robustness analysis can provide conclusive results about more regions of NNs. As the $\varepsilon$-robustness decision problem is $PP$-complete, we propose a probabilistic algorithm with user-controllable error bound to solve it. Based on the decision oracle, we further provide an efficient algorithm to find the maximum $\varepsilon$-robustness radius. The time complexity of our algorithm is polynomial in the dimension and size of the network. Experimental results present the scalability of our methods to large neural networks.

\section{Notification}
This work was completed in November 2020, but the paper was not published. The patent was applied in 2020.
\bibliography{bibfile}

\section{Appendices}

\subsection{Complexity Proof}

First, we will introduce a PP-complete problem called MAJORITY SAT (MAJSAT($\geq2^{n-1}$)): given a CNF-formula $\varphi$ with $n$ variables $\{x_1...x_n\}$, is it satisfied by at least half of the possible truth assignments? To facilitate follow-up proof, we denote the variant version as MAJSAT'($>2^{n-1}$) which can be easily reduced from MAJSAT in polynomial time. Let 
\begin{equation*}
	\begin{aligned}
		\psi=&x_1\vee (\neg x_1 \wedge x_2 \wedge ...\wedge x_n)\\
		&(x_1\vee x_2) \wedge ...\wedge (x_1\vee x_n))
	\end{aligned}
\end{equation*}

We know that $\psi$ has $2^{n-1}+1$ truth assignments.

\begin{equation*}
	\varphi'= (\neg p \wedge \varphi)\vee (p \wedge \psi)
\end{equation*}
where $p$ is an added variable and $\varphi'$ can be easily transformed to CNF in polynomial time via distributive law.  Assuming $\varphi$ has $m$ truth assignments. So $\varphi'$ has $m+2^{n-1}+1$ truth assignments and $n+1$ variables.  We can query MAJSAT' whether $\varphi'$ is satisfied by at least $(2^{n+1}/2=2^n)$ assignments. If yes, we have $m+2^{n-1}+1>2^n$ which can be rewritten as $m \geq 2^{n-1}$. Otherwise, we have $m<2^{n-1}$.

\begin{theorem}
	If the input space $\mathbb{D}^n$ is a countable set, $\varepsilon$-robustness decision problem is $PP$-complete.
\end{theorem}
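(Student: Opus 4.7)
The plan is to prove both directions: membership in PP and PP-hardness by reduction from MAJSAT'$(>2^{n-1})$.

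For the PP-membership direction, I will exhibit a probabilistic polynomial-time Turing machine $M$ that, on input $(F, x_*, r, \varepsilon, \Omega)$, accepts with probability strictly greater than $1/2$ iff $p_r > 1 - \varepsilon$. The machine proceeds in three steps: (i) sample a point $x$ uniformly from the finite set $B_p(x_*, r) \cap \mathbb{D}^n$ in polynomial time (for $p = \infty$ this is coordinate-wise on the discrete grid, and analogous polynomial-time samplers exist for $p = 1, 2$); (ii) evaluate $T(F, x, \Omega)$ via one polynomial-time forward pass of $F$; (iii) decide accept/reject through a biased auxiliary coin whose bias $q$ is tuned so that the overall acceptance probability crosses $1/2$ exactly at $p_r = 1-\varepsilon$. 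Concretely, when $\varepsilon \leq 1/2$ the machine accepts iff $T(F,x,\Omega) = 1$ and a coin of bias $q = 1/(2(1-\varepsilon))$ lands heads, giving $\Pr[M\text{ accepts}] = p_r \cdot q$, which exceeds $1/2$ iff $p_r > 1-\varepsilon$; the case $\varepsilon > 1/2$ is symmetric (accept whenever $T=1$, and accept with bias $1 - 1/(2\varepsilon)$ when $T=0$). Since $\varepsilon$ is stored with fixed-length binary precision, $q$ is a dyadic rational that can be simulated exactly by polynomially many fair coin flips, so $M$ runs in probabilistic polynomial time and witnesses membership in PP.

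For PP-hardness, I will give a polynomial-time reduction from MAJSAT'$(>2^{n-1})$ to the $(1/2)$-robustness decision problem. Given a CNF formula $\varphi = C_1 \wedge \cdots \wedge C_m$ over Boolean variables $x_1, \ldots, x_n$, I construct a ReLU network $F$ of size $O(n + m)$ that classifies $\{0,1\}^n$ according to the truth value of $\varphi$. Literals are realized by the linear maps $l = x_i$ or $l = 1 - x_i$; for each clause $C_j$, the gadget $c_j = \mathrm{ReLU}\!\bigl(1 - \sum_{l \in C_j} l\bigr)$ outputs $0$ iff $C_j$ is satisfied on a Boolean input. Setting $u = \sum_{j=1}^m c_j$ to count violated clauses and $s = \mathrm{ReLU}(1 - u)$, one has $s = 1$ exactly when $\varphi$ is satisfied. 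The output logits $o_1 = s$ and $o_0 = 1 - s$ form a binary classifier with $F(x) = 1 \iff \varphi(x) = \text{true}$. Finally, fix $\mathbb{D} = \{0, 1\}$, $x_* = (0, \ldots, 0)$, $\Omega = \{1\}$, $\varepsilon = 1/2$, and choose $r$ large enough (e.g., $r = 1$ for $p = \infty$) so that $B_p(x_*, r) \cap \mathbb{D}^n = \{0,1\}^n$. Then $p_r$ equals the fraction of satisfying assignments of $\varphi$, and $p_r > 1/2$ iff $\varphi \in$ MAJSAT'$(>2^{n-1})$.

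The main obstacle is the PP-membership step, since PP is a brittle class whose definition demands strict inequality at $1/2$: the bias $q$ must be tuned so that $\Pr[M\text{ accepts}] > 1/2$ agrees with $p_r > 1-\varepsilon$ with no slack, requiring both an exact (not merely approximate) biased-coin simulation from the binary digits of $\varepsilon$, and a provably polynomial-time uniform sampler on the discrete ball $B_p(x_*, r) \cap \mathbb{D}^n$. The hardness construction, by contrast, is comparatively routine because Boolean inputs keep every intermediate quantity integral, so the ReLU gadgets realize exact $0/1$ indicator values and the encoded counting statement matches MAJSAT'$(>2^{n-1})$ exactly.
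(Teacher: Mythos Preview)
Your PP-hardness reduction is essentially the paper's: the same reduction from MAJSAT$'(>2^{n-1})$ via ReLU gadgets, with only the cosmetic difference that your clause indicator $c_j$ flags \emph{violation} while the paper's $y_j$ flags \emph{satisfaction}. The choice $\mathbb{D}=\{0,1\}$, $x_*=0^n$, $p=\infty$, $r=1$, $\varepsilon=1/2$ matches exactly.

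Your PP-membership argument, however, has a genuine gap. You claim that because $\varepsilon$ has fixed-length binary representation, the bias $q=1/(2(1-\varepsilon))$ is dyadic and hence exactly simulable with polynomially many fair flips. This is false: take $\varepsilon=1/4$, so $q=2/3$, which is not dyadic and cannot be produced exactly by any finite number of fair coin tosses. The same failure afflicts the $\varepsilon>1/2$ branch, where $1-1/(2\varepsilon)$ is again typically non-dyadic. Since PP (in its standard formulation) requires acceptance probability \emph{exactly} on the correct side of $1/2$, an approximate simulation of $q$ does not immediately suffice.

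The paper avoids this by using an \emph{additive} rather than multiplicative shift: flip a fair coin; on heads sample $x$ and accept iff $T(F,x,\Omega)=1$; on tails accept with probability $\varepsilon - 1/|\mathbb{D}|^{n+1}$. The overall acceptance probability is $\tfrac{1}{2}\bigl(p_r+\varepsilon-1/|\mathbb{D}|^{n+1}\bigr)$. Here the auxiliary bias involves $\varepsilon$ itself (dyadic by hypothesis) plus a small slack term, rather than its reciprocal. The slack $1/|\mathbb{D}|^{n+1}$ exploits the granularity of $p_r$ on the finite ball: since $p_r$ lies on a grid of spacing at least $1/|\mathbb{D}|^n$, the strict inequality $p_r>1-\varepsilon$ is equivalent to $p_r\ge 1-\varepsilon+1/|\mathbb{D}|^n$, which lets the dyadic approximation land strictly above $1/2$ on YES instances and strictly below on NO instances. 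Your multiplicative scheme could be repaired with the same granularity idea (approximate $q$ to precision finer than the grid), but as written the ``$q$ is dyadic'' step is incorrect.
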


\begin{proof}
	We first show that the problem is in PP. We prove that the $\varepsilon$-robustness decision problem is solvable by a probabilistic algorithm in polynomial time, with an error probability of less than $1/2$ for all instances. Consider a probabilistic algorithm that, chooses an $x$ uniformly at random in $B_p(x_*, r)$. Then, the algorithm tosses a coin that every outcome is equal.
	
	\begin{itemize}
		\item If the coin comes up heads, chooses an $x$ uniformly at random in $B_p(x_*, r)$. Then, 	check if $x$ makes $T(F,x,\Omega)=1$. If yes, output SAT. If no, output UNSAT.
		
		\item If the coin comes up tails, the algorithm will outputs SAT with probability $(\varepsilon-1/ |\mathbb{D}|^{n+1})$ and UNSAT with probability $(1-\varepsilon+1/ |\mathbb{D}|^{n+1})$.
	\end{itemize}
	
	If the neural network is $\varepsilon$-robust, $p_{r}>1-\varepsilon$ and it is equivalent to $p_{r}\geq 1-\varepsilon+1/ |\mathbb{D}|^{n}$. The algorithm will always output SAT with probability:
	\begin{equation}
		\begin{aligned} 
			&\frac{1}{2}(1-\varepsilon+\frac{1}{|\mathbb{D}|^{n}})+\frac{1}{2}(\varepsilon-\frac{1}{|\mathbb{D}|^{n+1}})\\
			&=\frac{1}{2}+\frac{1}{2}(\frac{1}{|\mathbb{D}|^{n}}-\frac{1}{|\mathbb{D}|^{n+1}})>\frac{1}{2}
		\end{aligned} 
	\end{equation}
	
	If the neural network is not $\varepsilon$-robust, $p_{r}\leq1-\varepsilon$. The algorithm will always output SAT with probability:
	\begin{equation}
		\begin{aligned} 
			&\frac{1}{2}(1-\varepsilon)+\frac{1}{2}(\varepsilon-\frac{1}{|\mathbb{D}|^{n+1}})\\
			&=\frac{1}{2}-\frac{1}{2}\frac{1}{|\mathbb{D}|^{n+1}}<\frac{1}{2}
		\end{aligned} 
	\end{equation}

	Thus this algorithm puts $\varepsilon$-weakened robustness decision problem in PP.
	
	Next, we show the reduction from the MAJSAT'($>2^{n-1}$).  Any SAT formula $\varphi=C_1\wedge C_2\wedge...\wedge C_m$ can be transformed into a DNN with ReLUs. Input variables are mapped to each $t_i$ node according to the definition of clause $C_i$ and $y_i$ will be equal to 1 if clause $C_i$ is satisfied, and will be 0 otherwise. Fig \ref{figproof1} shows the method to construct a ``Not'' gadget and  a clause $C_m$. The clause gadget can be regarded as calculating the expression:
	\begin{equation*}
		y_m=1-\max(0,1-\sum_{j=1}^{k}l_j)
	\end{equation*}
	
	Then, we add an extra output node $o_2$ with input 1 and set edge weight to $m-0.5$ in the last layer (Fig \ref{figproof2} is a schematic). If $o_1>o_2$, the network outputs lable $l_1$. Otherwise, it outputs lable $l_2$. Therefore, the output will be $l_1$ if and only if all clauses are simultaneously satisfied.
	
	Let $\Omega=\{l_1\}$, $p=\infty$, $r=1$,$\varepsilon=1/2$ and $x_*=[0,0, ..., 0]$. Then we have that the SAT formula $\varphi$ has at least $2^{n-1}$ assignments if and only if the neural network is $1/2$-weakened robust.
\end{proof}

\begin{figure}[h!]
	\centering
	\includegraphics[width=0.6\linewidth]{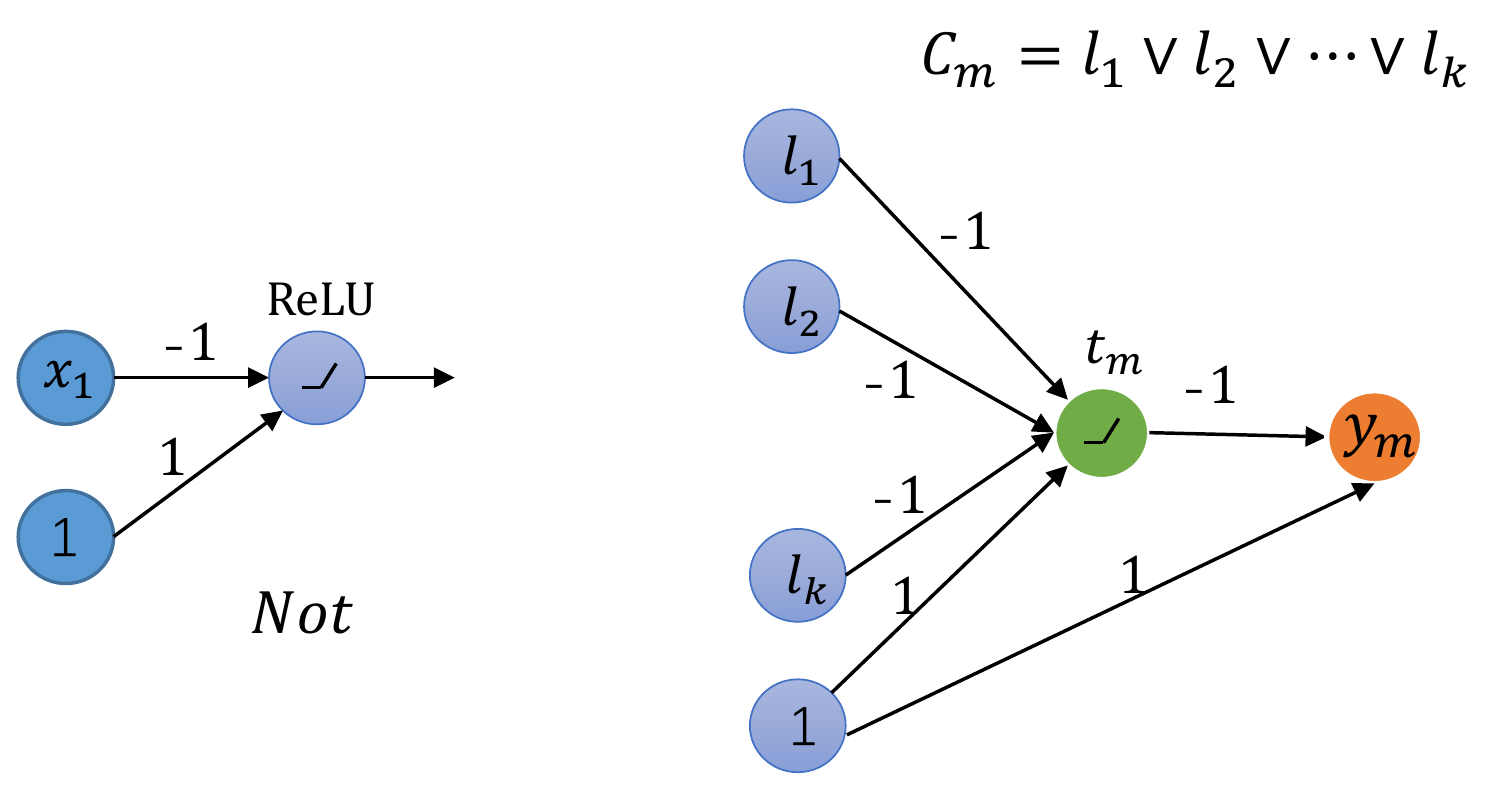}
	\caption{The ``Not'' gadget and ``Clause'' gadget.}
	\label{figproof1}
\end{figure}

\begin{figure}[h!]
	\centering
	\includegraphics[width=0.6\linewidth]{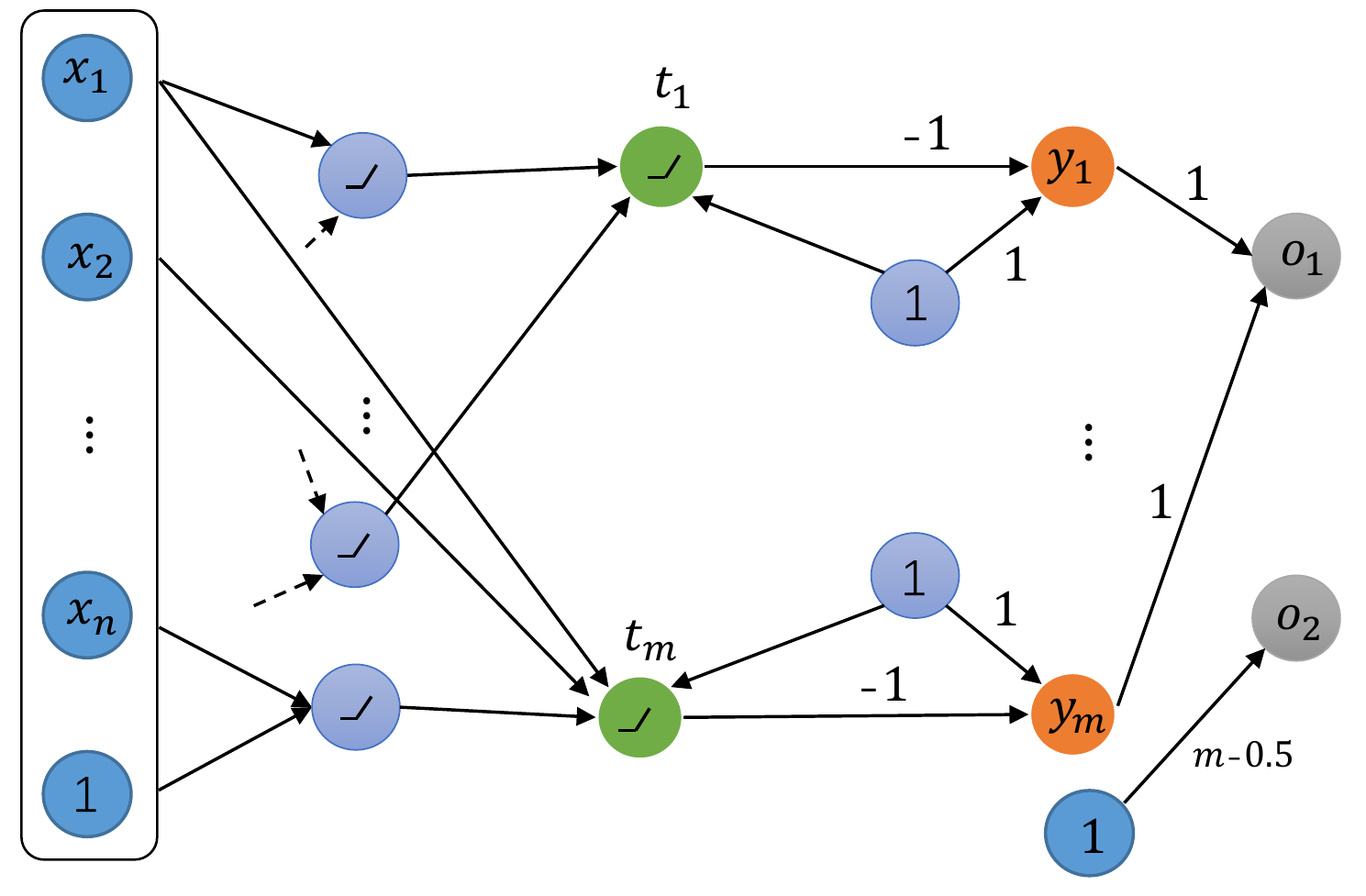}
	\caption{The network}
	\label{figproof2}
\end{figure}

\subsection{Sampling Algorithms}\label{sample_Algorithm}
In this section, uniform sampling algorithms in $\ell_1$, $\ell_2$ and $\ell_{\infty}$-norm balls are given.

Algorithm 1 outlines the method to sample a point uniformly in $B_1(x_*,r)$.

\begin{algorithm}[h]
	\caption{Sample a point uniformly in $B_1(x_*,r)$}
	\KwIn{$x_*$- test point, $r$-distortion radius}
	\KwOut{ $y=(y^{(1)}...y^{(n)})$}
	$y' \leftarrow (y'^{(0)}...y'^{(n)})$\;
	$y'^{(0)} \leftarrow 0$\;
	\For{$i \leftarrow 1$ \KwTo $n$}{
		$y'^{(i)} \sim \mathcal{U}(0,r)$\ \tcp*[f]{Uniform distribution}
	}
	$y'^{(1)}...y'^{(n)} \leftarrow Sort(y'^{(1)}...y'^{(n)})$;\tcp*[f]{Ascending order}
	
	\For{$i \leftarrow 1$ \KwTo $n$}
	{
		$y^{(i)} \leftarrow RandomSign \cdot (y'^{(i)}-y'^{(i-1)})$\;
	}
	$y\leftarrow y+x_*$\;
	\Return $y$
\end{algorithm}

Algorithm 1 makes use of the uniform spacings to construct the random sampling point~\cite{Devroye86}. We know that $\sum_{i=1}^{n}|y'^{(i)}-y'^{(i-1)}|=y'^{(n)}\leq r$, so the sample point is in $B_1(x_*,r)$. The average-case time complexity of Algorithm 1 is $\mathcal{O}(n\log n)$ and the worst case is $\mathcal{O}(n^2)$.

Algorithm 2 outlines the method to sample a point uniformly in $B_2(x_*,r)$.
\begin{algorithm}[h]
	\caption{Sample a point uniformly in $B_2(x_*,r)$}
	\KwIn{$x_*$- test point, $r$-distortion radius}
	\KwOut{ $y=(y^{(1)}...y^{(n)})$}
	$y' \leftarrow (y'^{(1)}...y'^{(n)})$\;
	\For{$i \leftarrow 1$ \KwTo $n$}{
		$y'^{(i)} \sim \mathcal{N}(0,1)$\;
	}
	$s \leftarrow \sum\nolimits_{i=1}^n  (y'^{(i)})^2$;  \tcp*[f]{$\left\|y'\right\|_2^2$}
	
	$\kappa \leftarrow \Gamma(n/2, s/2)^{\frac{1}{n}}$; \tcp*[f]{Incomplete gamma function}
	
	\For{$i \leftarrow 1$ \KwTo $n$}
	{
		$y^{(i)} \leftarrow  \frac{r \cdot \kappa \cdot y'^{(i)}}{\sqrt{s}}$\;
	}
	$y\leftarrow y+x_*$\;
	\Return $y$
\end{algorithm}

This algorithm was first proposed by Muller \cite{Muller}. It makes use of a strong property that the exponent part of joint Gaussian distribution is the same as the expression for calculating the $\ell_2$-norm. The time complexity of this method is $\mathcal{O}(n)$.

Algorithm 3 outlines the method to sample a point uniformly in $B_\infty(x_*,r)$ and the time complexity is $\mathcal{O}(n)$.
\begin{algorithm}[h]
	\caption{Sample a point uniformly in $B_\infty(x_*,r)$}
	\KwIn{$x_*$- test point, $r$-distortion radius}
	\KwOut{ $y=(y^{(1)}...y^{(n)})$}
	\For{$i \leftarrow 1$ \KwTo $n$}{
		$y^{(i)} \sim \mathcal{U}(-r,r)$\;
	}
	$y\leftarrow y+x_*$\;
	\Return $y$
\end{algorithm}

\end{document}